\DeclareMathOperator*{\argmin}{\arg \min}
\DeclareMathOperator*{\E}{\mathbb{E}}
\newcommand{\bound}{\operatorname{Bound}}
\newcommand{\dmax}{d_{\mathrm{max}}}
\newtheorem{theorem}{Theorem}
\newtheorem{lemma}[theorem]{Lemma}
\newtheorem{corollary}[theorem]{Corollary}
\newtheorem{example}[theorem]{Example}
\newcommand{\alg}[1]{\ensuremath{\mathtt{#1}}}
\title{Nonstochastic Multiarmed Bandits \\ with Unrestricted Delays}
\author{%
Tobias Sommer Thune\thanks{Part of this work was done while visiting Universit\`a degli Studi di Milano, Milan, Italy}  \\
University of Copenhagen\\
 Copenhagen, Denmark \\
\texttt{tobias.thune@di.ku.dk}\\
\And
Nicol\`o Cesa-Bianchi\\
DSRC \& Univ.~degli Studi di Milano\\
Milan, Italy\\
\texttt{nicolo.cesa-bianchi@unimi.it}\\
\And
Yevgeny Seldin\\
University of Copenhagen\\
Copenhagen, Denmark\\
\texttt{seldin@di.ku.dk}\\
}
\begin{document}

\maketitle




\begin{abstract}
We investigate multiarmed bandits with delayed feedback, where the delays need neither be identical nor bounded. We first prove that "delayed" \alg{Exp3} achieves the $\mathcal{O} \big (\sqrt{(KT + D)\ln K} \big )$ regret bound conjectured by \citet{CBGMM16} in the case of variable, but bounded delays. Here, $K$ is the number of actions and $D$ is the total delay over $T$ rounds. We then introduce a new algorithm that lifts the requirement of bounded delays by using a wrapper that skips rounds with excessively large delays. 
The new algorithm maintains the same regret bound, but similar to its predecessor requires prior knowledge of $D$ and $T$. 
For this algorithm we then construct a novel doubling scheme that forgoes the prior knowledge requirement under the assumption that the delays are available at action time (rather than at loss observation time). This assumption is satisfied in a broad range of applications, including interaction with servers and service providers. 
The resulting oracle regret bound is of order $\min_\beta \big(|S_\beta|+\beta \ln K + (KT + D_\beta)/\beta\big)$, where $|S_\beta|$ is the number of observations with delay exceeding $\beta$, and $D_\beta$ is the total delay of observations with delay below $\beta$. The bound relaxes to $\mathcal{O} \big (\sqrt{(KT + D)\ln K} \big)$, but we also provide examples where $D_\beta \ll D$ and the oracle bound has a polynomially better dependence on the problem parameters.
\end{abstract}

\section{Introduction}
Multiarmed bandits   is an algorithmic paradigm for sequential decision making with a growing range of industrial applications, including content recommendation, computational advertising, and many more. In the multiarmed bandit framework an algorithm repeatedly takes actions (e.g., recommendation of content to a user) and observes outcomes of these actions (e.g., whether the user engaged with the content), whereas the outcome of alternative actions (e.g., alternative content that could have been recommended) remains unobserved. In many real-life situations the algorithm experience delay between execution of an action and observation of its outcome. Within the delay period the algorithm may be forced to make a series of other actions (e.g., interact with new users) before observing the outcomes of all the previous actions. This setup falls outside of the classical multiarmed bandit paradigm, where observations happen instantaneously after the actions, and motivates the study of bandit algorithms that are provably robust in the presence of delays.

We focus on the nonstochastic (a.k.a.\ oblivious adversarial) bandit setting, where the losses faced by the algorithm are generated by an unspecified deterministic mechanism. Though it might be of adversarial intent, the mechanism is oblivious to internal randomization of the algorithm. In the delayed version, the loss of an action executed at time $t$ is observed at time $t+d_t$, where the delay $d_t$ is also chosen deterministically and obliviously. Thus, at time step $t$ the algorithm receives observations from time steps $s \leq t$ for which $s+d_s = t$. 
This delay is the independent of the action chosen. 
The algorithm's performance is evaluated by 
regret, which is the difference between the algorithm's cumulative loss and the cumulative loss of the best static action in hindsight. The regret definition is the same as in the ordinary setting without delays. When all the delays are constant ($d_t=d$ for all $t$), the optimal regret is known to scale as
$\mathcal{O}\big(\sqrt{(K+d)T\ln K}\big)$, where $T$ is the time horizon and $K$ is the number of actions \citep{CBGMM16}. Remarkably, this bound is achieved by ``delayed'' \alg{Exp3}, which is a minor modification of the standard \alg{Exp3} algorithm performing updates as soon as the losses become available.

The case of variable delays has previously been studied in the full information setting by \citet{JGS16}. They prove a regret bound of order $\sqrt{(D+T)\ln K}$, where $D = \sum_{t=1}^T d_t$ is the total delay. Their proof is based on a generic reduction from delayed full information feedback to full information with no delay. The applicability of this technique to the bandit setting is unclear (see Appendix~\ref{app:solid}). \citet{CBGMM16} conjecture an upper bound of order $\sqrt{(KT+D)\ln K}$ for the bandit setting with variable delays. Note that this bound cannot be improved in the general case because of the lower bound $\Omega\big(\sqrt{(K+d)T}\big)$, which holds for any $d$. In a recent paper, \citet{LCG19} study a harder variant of bandits, where the delays $d_t$ remain unknown. As a consequence, if an action is played at time $s$ and then more times in between time steps $s$ and $s+d_s$, the learner cannot tell which specific round the loss observed at time $s+d_s$ refers to. In this harder setting, for known $T$, $D$, and $\dmax$, \citet{LCG19} prove a regret bound of $\widetilde{\mathcal{O}}\big(\sqrt{\dmax K(T+D)}\big)$. \citet{CBGM18} further study an even harder setting of bandits with anonymous composite feedback. In this setting at time step $t$ the learner observes feedback, which is a composition of partial losses of the actions taken in the last $\dmax$ rounds. In this setting \citet{CBGM18} obtain an $\mathcal{O}\big(\sqrt{\dmax KT\ln K}\big)$ regret bound (which is tight to within the $\ln K$ factor, and in fact tighter than the bound of \citet{LCG19} for an easier problem). 

Our paper is structured in the following way. We start by investigating the regret of \alg{Exp3} in the variable delay setting. We prove that for known $T$, $D$, and $\dmax$, and assuming that $\dmax$ is at most of order $\sqrt{(KT+D)/(\ln K)}$, "delayed" \alg{Exp3} achieves the conjectured bound of $\mathcal{O}\big(\sqrt{(KT+D)\ln K}\big)$. In order to 
remove the restriction on $\dmax$ and eliminate the need of its knowledge we introduce a wrapper algorithm, \alg{Skipper}. \alg{Skipper} prevents the wrapped bandit algorithm from making updates using observations with delay exceeding a given threshold $\beta$. This threshold acts as a tunable upper bound on the delays observed by the underlying algorithm, so if $T$ and $D$ are known we can choose $\beta$ that attains the desired $\mathcal{O}\big(\sqrt{(KT+D)\ln K}\big)$ regret bound with "delayed" \alg{Exp3} wrapped within \alg{Skipper}.

To dispense of the need for knowing $T$ and $D$, the first approach coming to mind is the doubling trick. However, applying the standard doubling to $D$ is problematic, because the event that the actual total delay $d_1 + \dots + d_t$ exceeds an estimate $D$ is observed at time $t+d_t$ rather than at time $t$. In order to address this issue, we consider a setting in which the algorithm observes the delay $d_t$ at time $t$ rather than at time $t+d_t$. To distinguish between this setting and the previous one we say that "the delay is observed at action time" if it is observed at time $t$ and "the delay is observed at observation time" if it is observed at time $t+d_t$. 
Observing the delay at action time is motivated by scenarios in which a learning agent depends on feedback from a third party, for instance a server or laboratory that processes the action in order to evaluate it. In such cases, the third party might partially control the delay, and provide the agent with a delay estimate based on contingent and possibly private information. In the server example the delay could depend on workload, while the laboratory might have processing times and an order backlog. Other examples include medical imaging where the availability of annotations depends on medical professionals work schedule. Common for these examples is that the third party knows the delay before the action is taken.

\begin{table}[t]
  \caption{Spectrum of delayed feedback settings and the corresponding regret bounds, progressing from easier to harder settings. Results marked by (*) have matching lower bounds up to the $\sqrt{\ln K}$ factor. If all the delays are identical, then $D=dT$ and (**) has a lower bound following from \citet{CBGMM16} and matching up to the $\sqrt{\ln K}$ factor. However, for non-identical delays the regret can be much smaller, as we show in Example~\ref{example:non-uniform}. 
  }
  \label{table:spectrum}
  \centering
  \begin{tabular}{lll}
    \toprule
   Setting  & Regret Bound & Reference \\
    \midrule
    Fixed delay & $
    \mathcal{O}\big (\sqrt{(K+d)T\ln K} \big)$ ~~~~~~ (*)& \citet{CBGMM16} \\[5pt]
    Delay at action time & $\mathcal{O}\left (\min_\beta \left(|S_\beta|+\beta \ln K + \frac{KT + D_\beta}{\beta} \right ) \right )$ & This paper \\[5pt]
    \multirow{2}{3.1cm}{Delay at observation time with known $T,D$} & $\mathcal{O}\big (\sqrt{(KT +D)\ln K} \big)$ ~~~~~ (**)& This paper \\[12pt]
    \multirow{2}{3.3cm}{Anonymous, composite with known $\dmax$} &
    $
    \mathcal{O}\big(\sqrt{\dmax KT \ln K}\big)$ ~~~~~~~~~~ (*)
    & \citet{CBGM18} \\[10pt]
    \bottomrule
  \end{tabular}
\end{table}
Within the "delay at action time" setting we achieve a much stronger regret bound. 
We show that \alg{Skipper} wrapping delayed \alg{Exp3} and combined with a carefully designed doubling trick enjoys an implicit regret bound of order $\min_\beta \big(|S_\beta| + \beta \ln K + (KT + D_\beta)/\beta\big)$, where 
$D_\beta$ is the total delay of observations with delay below $\beta$. This bound is attained without any assumptions on the sequence of delays $d_t$ and with no need for prior knowledge of $T$ and $D$. The implicit bound can be relaxed to an explicit bound of $\mathcal{O}\big(\sqrt{(KT+D)\ln K}\big)$, however if $D_\beta \ll D$ it can be much tighter. We provide an instance of such a problem in Example~\ref{example:non-uniform}, where we get a polynomially tighter bound.

Table~\ref{table:spectrum} summarizes the spectrum of delayed feedback models in the bandit case and places our results in the context of prior work. 

\subsection{Additional related work}
Online learning with delays was pioneered by \citet{mesterharm2005line} --- see also \cite[Chapter~8]{Mester2007}. More recent work in the full information setting include \citep{zinkevich2009slow,NIPS2015_5833,ghosh2018online}. 
The theme of large or unbounded delays in the full information setting was also investigated by \citet{mann2018learning} and \citet{garrabrant2016asymptotic}.
Other related approaches are the works by \citet{shamir2017online}, who use a semi-adversarial model, and \citet{chapelle2014modeling}, who studies the role of delays in the context of onlne advertising.
\citet{chapelle2011empirical} perform an empirical study of the impact of delay in bandit models. This is extended in \citep{mandel2015queue}.
The analysis of \alg{Exp3} in a delayed setting was initiated by \citet{neu2014online}. In the stochastic case, bandit learning with delayed feedback was studied in \citep{DBLP:conf/uai/DudikHKKLRZ11,VernadeCP17}. The results were extended to the anonymous setting by \citet{pike2017bandits} and by \citet{garg2019stochastic}, and to the contextual setting by \citet{arya2019randomized}.

\section{Setting and notation}
We consider an oblivious adversarial multiarmed bandit setting, where $K$ sequences of losses are generated in an arbitrary way prior to the start of the game. The losses are denoted by $\ell_t^a$, where $t$ indexes the game rounds and $a \in \{1,\dots,K\}$ indexes the sequences. We assume that all losses are in the $[0,1]$ interval. We use the notation $[K] = \{1,\dots,K\}$ for brevity. At each round of the game the learner picks an action $A_t$ and suffers the loss of that action. The loss $\ell_t^{A_t}$ is observed by the learner after $d_t$ rounds, where the sequence of delays $d_1,d_2,\dots$ is determined in an arbitrary way before the game starts. Thus, at round $t$ the learner observes the losses of prior actions $A_s$ for which $s + d_s = t$. We assume that the losses are observed "at the end of round $t$", \emph{after} the action $A_t$ has been selected. We consider two different settings for receiving information about the delays $d_t$:
\begin{description}[topsep=0pt,parsep=0pt,itemsep=0pt]
    \item[Delay available at observation time] The delay $d_t$ is observed when the feedback $\ell_{t}^{A_{t}}$ arrives at the end of round $t+d_t$. This corresponds to the feedback being timestamped.
    \item[Delay available at action time] The delay $d_t$ is observed at the beginning of round $t$, prior to selecting the action $A_t$.
\end{description}
The following learning protocol provides a formal description of our setting.

\framebox[\textwidth]{
\begin{minipage}{0.9\textwidth}
\textbf{Protocol for bandits with delayed feedback}

\smallskip\noindent
For $t=1,2,\ldots$
\begin{enumerate}[topsep=0pt,parsep=0pt,itemsep=0pt]
    \item If \textsl{delay is available at action time}, then $d_t \ge 0$ is revealed to the learner
    \item The learner picks an action $A_t \in  \{1,\ldots,K\}$ and suffers the loss $\ell_t^{A_t} \in [0,1]$
    \item Pairs $\big(s,\ell_s^{A_s}\big)$ for all $s \le t$ such that $s + d_s = t$ are observed
\end{enumerate}
\end{minipage} }



We measure the performance of the learner by her \emph{expected regret} $\bar{\mathcal{R}}_{T}$, which is defined as the difference between the expected cumulative loss of the learner and the loss of the best static strategy in hindsight:
\begin{align*}
    \bar{\mathcal{R}}_{T} = \E \left [ \sum_{t=1}^T \ell_{t}^{A_{t}} \right] - \min_{a} \sum_{t=1}^T \ell_{t}^{a}.
\end{align*}
%
This regret definition is the same as the one used in the standard multiarmed bandit setting without delay.

\section{Delay available at observation time: Algorithms and results} 
\label{section:approach}

This section deals with the first of our two settings, namely when delays are observed together with the losses. We first introduce a modified version of "delayed" \alg{Exp3}, which we name Delayed Exponential Weights (\alg{DEW}) and which is capable of handling variable delays. We then introduce a wrapper algorithm, \alg{Skipper}, which filters out excessively large delays. The two algorithms also serve as the basis for the next section, where we provide yet another wrapper for tuning the parameters of \alg{Skipper}.



\subsection{Delayed Exponential Weights (\alg{DEW})}

\alg{DEW} is an extension of the standard exponential weights approach to handle delayed feedback. The algorithm, laid out in Algorithm~\ref{alg:dew}, performs an exponential update using every individual feedback as it arrives, which means that between each prediction either zero, one, or multiple updates might occur.  The algorithm assumes that the delays are bounded and that an upper bound $\dmax \geq \max_t d_t$ on the delays is known. 

\begin{algorithm}
	\SetKwInOut{Input}{Input}
	\Input{ Learning rate $\eta$; upper bound on the delays $\dmax$}
	\BlankLine
	Truncate the learning rate:	$\eta' =\min\{\eta, (4e\dmax)^{-1}\}$\;
	\BlankLine
	Initialize $w_0^a = 1$ for all $a \in [K]$\;
	\For{$t=1,2,\dots$}{
		Let $p_t^a = \frac{w_{t-1}^a}{\sum_b w_{t-1}^b}$ for $a \in [K]$\;
		Draw an action $A_t \in [K]$ according to the distribution $\bm{p_t}$ and play it\;
		Observe feedback $(s,\ell_s^{A_{s}})$ for all $\{s: s + d_{s} = t\}$ and construct estimators $\hat{\ell}_{s}^{a} = \frac{\ell_{s}^{a} \mathbbm{1}(a=A_{s})}{p_{s}^{a}}$\;
		Update $w_{t}^{a} = w_{t-1}^{a} \exp\left ( - \eta' \sum_{s: s+ d_{s} = t} \hat{\ell}_{s}^{a} \right )$\;
	}
	\caption{Delayed exponential weights (DEW)\label{alg:dew}}
\end{algorithm}

The following theorem provides a regret bound for Algorithm~\ref{alg:dew}. 
The bound is a generalization of 
a similar bound in \citet{CBGMM16}. 

\begin{theorem} \label{thm:dew}
Under the assumption that an upper bound on the delays $\dmax$ is known, the regret of Algorithm~\ref{alg:dew} with a learning rate $\eta$ against an oblivious adversary satisfies
\begin{align*}
  \bar{\mathcal{R}}_{T} \leq \max \left \{ \frac{\ln K}{\eta}, 4e\dmax \ln K \right \} + \eta \left ( \frac{KTe}{2} + D \right),
\end{align*}
where $D=\sum_{t=1}^T d_t$. In particular, if $T$ and $D$ are known and $\eta = \sqrt{ \frac{\ln K}{ \frac{KTe}{2} + D}} \leq \frac{1}{4e\dmax}$, we have
\begin{align}
  \bar{\mathcal{R}}_{T} \leq 2 \sqrt{  \left ( \frac{KTe}{2}
  	+ D \right) \ln K}.
\label{eq:DEWexplicit}
\end{align}
\end{theorem}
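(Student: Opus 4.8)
The plan is to run the standard exponential-weights potential argument, but applied to the losses in the order in which they \emph{arrive}, and then to pay separately for the mismatch between the sampling distribution $\bm{p}_s$ used when action $A_s$ was drawn and the distribution $\bm{p}_{s+d_s}$ that is effectively charged for $\hat\ell_s$ in that argument. First I would remove the estimators by unbiasedness: since $\bm{p}_t$ is measurable with respect to the feedback observed strictly before round $t$ and $\E[\hat\ell_s^a]=\ell_s^a$ conditionally, the expected regret equals $\E\big[\sum_t \langle \bm{p}_t,\hat\ell_t\rangle - \sum_t \hat\ell_t^{a}\big]$ for any fixed comparator $a$. Writing $c_t^a=\sum_{s:\,s+d_s=t}\hat\ell_s^a$ for the aggregate loss incorporated at round $t$, the weights satisfy $w_t^a=\exp(-\eta'\sum_{\tau\le t}c_\tau^a)$, so the normalizer $W_t=\sum_a w_t^a$ telescopes.

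With $\Phi=\tfrac1{\eta'}\ln W$, I would lower-bound $\tfrac1{\eta'}\ln(W_T/W_0)\ge -\sum_s\hat\ell_s^{a}-\tfrac{\ln K}{\eta'}$ (nonnegativity of the estimators lets me drop any losses with $s+d_s>T$), and upper-bound each step via $e^{-x}\le 1-x+x^2/2$ and $\ln(1+x)\le x$ to obtain
\begin{align*}
\tfrac1{\eta'}\ln\tfrac{W_T}{W_0}\le -\sum_t\langle \bm{p}_t,c_t\rangle+\tfrac{\eta'}{2}\sum_t\sum_a p_t^a (c_t^a)^2 .
\end{align*}
Reindexing the first sum by arrival, $\sum_t\langle \bm{p}_t,c_t\rangle=\sum_s\langle \bm{p}_{s+d_s},\hat\ell_s\rangle$, and rearranging yields the key inequality
\begin{align*}
\sum_s\langle \bm{p}_s,\hat\ell_s\rangle-\sum_s\hat\ell_s^{a}\le \tfrac{\ln K}{\eta'}+\tfrac{\eta'}{2}\sum_t\sum_a p_t^a (c_t^a)^2+\sum_s\langle \bm{p}_s-\bm{p}_{s+d_s},\hat\ell_s\rangle ,
\end{align*}
which isolates the comparator term $\ln K/\eta'$, a variance term, and a \emph{drift} term measuring the change of the sampling distribution over the delay window.

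The crux is controlling the drift, and this is exactly what the truncation $\eta'\le(4e\dmax)^{-1}$ together with $d_s\le\dmax$ buys. I would prove a drift lemma stating that for every arm $a$ and every $s$, $\tfrac1e\,p_s^a\le p_{s+d_s}^a\le e\,p_s^a$. The lower bound is easy: from $p_{s+d_s}^a/p_s^a=e^{-\eta'\sum_{\tau=s}^{s+d_s-1}c_\tau^a}\cdot (W_{s-1}/W_{s+d_s-1})$ with $W$ nonincreasing and $e^{-x}\ge 1-x$, one gets $1-p_{s+d_s}^a/p_s^a\le \eta'\sum_{\tau=s}^{s+d_s-1}c_\tau^a$; the matching upper bound is the harder direction and is where the factor $e$ and the hypothesis $d_s\le\dmax$ are essential, since one must show that over a window of at most $\dmax$ rounds the small updates inflate any coordinate of $\bm{p}$ by at most a factor $e$. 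Because $\hat\ell_s$ is supported on $A_s$, the lower bound turns the drift term into $\sum_s\big(1-p_{s+d_s}^{A_s}/p_s^{A_s}\big)\ell_s^{A_s}\le \eta'\sum_s\sum_{\tau=s}^{s+d_s-1}c_\tau^{A_s}$, which in expectation contributes $\eta' D$ (each round $s$ supplying a window of length $d_s$), while the upper bound $p_{s+d_s}^a\le e\,p_s^a$ turns the diagonal part of the variance term into $\tfrac{\eta'}{2}\E\sum_s\sum_a p_{s+d_s}^a(\hat\ell_s^a)^2\le \tfrac{\eta' e}{2}\sum_s\E\sum_a(\ell_s^a)^2\le \tfrac{\eta' e KT}{2}$. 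The only delicate point beyond the drift lemma is the cross terms in $(c_t^a)^2$ produced by several losses arriving simultaneously; I expect these to be absorbed by the same factor-$e$ comparison after reducing to the diagonal, and this bookkeeping — not any new idea — is where most of the care goes.

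Finally I would collect the three pieces, using $1/\eta'=\max\{1/\eta,\,4e\dmax\}$ for the comparator term and $\eta'\le\eta$ for the variance and drift terms, to obtain $\bar{\mathcal R}_T\le \max\{\ln K/\eta,\,4e\dmax\ln K\}+\eta(KTe/2+D)$. The explicit bound~\eqref{eq:DEWexplicit} is then immediate: the stated choice $\eta=\sqrt{\ln K/(KTe/2+D)}$ satisfies $\eta\le(4e\dmax)^{-1}$ by hypothesis, so $\eta'=\eta$ and $\ln K/\eta\ge 4e\dmax\ln K$, making the maximum equal to $\sqrt{(KTe/2+D)\ln K}$; since $\eta(KTe/2+D)$ equals the same quantity, the two terms sum to $2\sqrt{(KTe/2+D)\ln K}$. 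The main obstacle is thus the two-sided drift lemma $p_s^a/e\le p_{s+d_s}^a\le e\,p_s^a$, and in particular its upper direction, which is precisely the step that forces the learning-rate truncation and the knowledge of $\dmax$.
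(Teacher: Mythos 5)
Your architecture is the paper's: an exponential-weights potential argument run in arrival order, a two-sided comparison between $\bm{p}_s$ and $\bm{p}_{s+d_s}$ in which the lower direction pays $\eta' D$ for the drift and the upper direction pays a factor $e$ in the variance term, with the truncation $\eta'\le(4e\dmax)^{-1}$ making the upper direction possible; the final assembly and the tuning for \eqref{eq:DEWexplicit} are correct. The problem is that the two steps you defer are exactly where the proof lives, and your sketches of them would not go through as stated. The upper bound $p_{s+d_s}^a\le e\,p_s^a$ is not a matter of ``small updates over a window of at most $\dmax$ rounds'': a single arriving estimator $\hat\ell_u^{A_u}=\ell_u^{A_u}/p_u^{A_u}$ can be huge, and the one-step multiplicative increase of a coordinate $q^a$ is only small because $\sum_b q^b\hat\ell_u^b = q^{A_u}\ell_u^{A_u}/p_u^{A_u}$, where the ratio $q^{A_u}/p_u^{A_u}$ is precisely the quantity you are trying to bound. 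The paper's Lemma~\ref{lemma:upper} breaks this circularity by an induction over \emph{arrivals} (not rounds): each arrival inflates any coordinate by at most $1+\tfrac1{2N-1}$, and the per-step bound itself invokes the inductive hypothesis over the at most $2N-1$ arrivals separating the draw of $A_u$ from the current state, with $N\le 2\dmax$ connecting this to the truncation. Your plan offers no mechanism for this, and it is the central idea of the theorem.

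The second gap is the variance term. Aggregating simultaneous arrivals into $c_t^a=\sum_{s:s+d_s=t}\hat\ell_s^a$ produces $\sum_a p_t^a(c_t^a)^2$, whose off-diagonal terms $\hat\ell_s^a\hat\ell_{s'}^a$ are \emph{not} absorbed by the factor-$e$ comparison: after taking expectations one importance weight per pair survives, and bounding the total number of such pairs requires a separate counting argument that contributes another $O(\eta' D)$ and spoils the stated constants. The paper avoids the cross terms entirely by using $\prod_{s:s+d_s=t}e^{-\eta'\hat\ell_s^a}\le\sum_{s:s+d_s=t}e^{-\eta'\hat\ell_s^a}$ (each factor lies in $(0,1]$) \emph{before} expanding the exponential, which keeps the second-order term diagonal; you should do the same. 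A smaller issue of the same flavor: $N_s$ counts arrivals in the window $[s,s+d_s)$, not its length, so ``each round $s$ supplying a window of length $d_s$'' does not give $\sum_s N_s\le D$; that inequality needs the paper's swap-and-cancel counting argument.
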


%
The proof of Theorem~\ref{thm:dew} is based on proving the stability of the algorithm across rounds. The proof is sketched out in Section~\ref{section:proofs}.  
As Theorem~\ref{thm:dew} shows, Algorithm~\ref{alg:dew} performs well if $\dmax$ is small
and we also have preliminary knowledge of $\dmax$, $T$, and $D$. However, a single delay of order $T$ increases $\dmax$ up to order $T$, which leads to a linear regret bound in Theorem~\ref{thm:dew}. This is an undesired property, which we address with the skipping scheme presented next. 




\subsection{Skipping scheme}


We introduce a wrapper for Algorithm~\ref{alg:dew}, called \alg{Skipper},  which disregards feedback from rounds with excessively large delays. The regret in the skipped rounds is trivially bounded by 1 (because the losses are assumed to be in $[0,1]$) and the rounds are taken out of the analysis of the regret of \alg{DEW}. \alg{Skipper} operates with an externally provided threshold $\beta$ and skips all rounds where $d_t \geq \beta$. The advantage of skipping is that it provides a natural upper bound on the delays for the subset of rounds processed by DEW, $\dmax=\beta$. Thus, we eliminate the need of knowledge of the maximal delay in the original problem. The cost of skipping is the number of skipped rounds, denoted by $|S_\beta|$, as captured in Lemma~\ref{lemma:skipper}. Below we provide a regret bound for the combination of  \alg{Skipper} and {DEW}.
\begin{algorithm}
	\SetKwInOut{Input}{Input}
	\Input{ Threshold $\beta$; Algorithm $\mathcal{A}$.}
	\BlankLine
	%
	%
	\For{$t=1,2,\dots$}{
		Get prediction $A_{t}$ from $\mathcal{A}$ and play it\;
		Observe feedback $(s,\ell_s^{A_{s}})$ for all $\{s: s + d_{s} = t\}$, and feed it to $\mathcal{A}$ for each $s$ with $d_s <\beta$\;
	}
	\caption{Skipper \label{alg:skipper}}
\end{algorithm}
\begin{lemma}\label{lemma:skipper}
The expected regret of \alg{Skipper} with base algorithm $\mathcal{A}$ and threshold parameter $\beta$ satisfies
\begin{align}
  \bar{\mathcal{R}}_{T} \leq \vert S_{\beta} \vert + \bar{\mathcal{R}}_{T \backslash S_{\beta}},
\end{align}
where $|S_\beta|$ is the number of skipped rounds (those for which $d_t \geq \beta$) and $\bar{\mathcal{R}}_{T \backslash S_\beta}$ is a regret bound for running $\mathcal A$ on the subset of rounds $[T]\backslash S_\beta$ (those, for which $d_t < \beta$).
\end{lemma}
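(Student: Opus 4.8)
The plan is to split the cumulative regret of \alg{Skipper} according to the partition of the horizon into the skipped rounds $S_\beta = \{t : d_t \geq \beta\}$ and the retained rounds $[T]\backslash S_\beta$, and to control each block separately. First I would fix a global comparator $a^\star \in \argmin_a \sum_{t=1}^T \ell_t^a$ and write $\bar{\mathcal{R}}_T = \E\big[\sum_{t=1}^T \ell_t^{A_t}\big] - \sum_{t=1}^T \ell_t^{a^\star}$, then separate each sum into its $t \in S_\beta$ and $t \notin S_\beta$ parts.

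For the skipped block, the argument is immediate: on every round the played loss satisfies $\ell_t^{A_t} \leq 1$ and the comparator loss satisfies $\ell_t^{a^\star} \geq 0$, so $\E\big[\sum_{t\in S_\beta}\ell_t^{A_t}\big] - \sum_{t\in S_\beta}\ell_t^{a^\star} \leq |S_\beta|$. For the retained block, the key point is that \alg{Skipper} forwards to $\mathcal{A}$ exactly the feedback of the rounds in $[T]\backslash S_\beta$ and nothing else; since the state of $\mathcal{A}$ (e.g.\ the weight vector of \alg{DEW}) is modified only upon receiving feedback, it evolves exactly as it would if $\mathcal{A}$ were run on the subsequence of retained rounds in isolation, and the predictions emitted on skipped rounds trigger no update and hence cannot influence it. Thus $\E\big[\sum_{t\notin S_\beta}\ell_t^{A_t}\big] - \min_a \sum_{t\notin S_\beta}\ell_t^a$ is precisely the regret $\bar{\mathcal{R}}_{T\backslash S_\beta}$ of $\mathcal{A}$ on the restricted instance. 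To connect this to the global comparator I would use $\sum_{t\notin S_\beta}\ell_t^{a^\star} \geq \min_a \sum_{t\notin S_\beta}\ell_t^a$, which holds because the best action on a subset of rounds incurs no more loss there than any fixed action; hence $\E\big[\sum_{t\notin S_\beta}\ell_t^{A_t}\big] - \sum_{t\notin S_\beta}\ell_t^{a^\star} \leq \bar{\mathcal{R}}_{T\backslash S_\beta}$. Summing the two block bounds gives the claim.

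The step I expect to require the most care is establishing that the trajectory of $\mathcal{A}$ on the retained rounds inside \alg{Skipper} genuinely coincides with a standalone run of $\mathcal{A}$ on those rounds, so that invoking $\bar{\mathcal{R}}_{T\backslash S_\beta}$ is justified. For state-based algorithms whose updates are driven solely by received feedback this is transparent, but making it fully rigorous for a randomized $\mathcal{A}$ amounts to coupling its internal randomness across the two executions so that the actions drawn on retained rounds are identically distributed; the draws performed on skipped rounds consume independent randomness and leave the retained-round marginals unchanged. The comparator-relaxation inequality, in contrast, is routine.
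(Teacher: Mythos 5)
Your proof is correct and follows essentially the same route as the paper's: both decompose the regret over skipped versus retained rounds, bound each skipped round's contribution by $1$ using $\ell_t^a \in [0,1]$, relax the comparator (your inequality $\sum_{t\notin S_\beta}\ell_t^{a^\star} \geq \min_a\sum_{t\notin S_\beta}\ell_t^a$ is the paper's superadditivity of the minimum), and invoke the fact that the state of $\mathcal{A}$ does not depend on the actions drawn in skipped rounds. The only cosmetic difference is that the paper peels off one skipped round at a time and iterates, whereas you partition all skipped rounds at once.
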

%
A proof of the lemma is found in Appendix~\ref{appendix:additionalproofs}. When combined with the previous analysis for \alg{DEW}, Lemma~\ref{lemma:skipper} gives us the following regret bound.
\begin{theorem}\label{thm:fullAlg}
The expected regret of $\alg{Skipper}(\beta, \alg{DEW}(\eta,\beta))$ against an oblivious adversary satisfies
\begin{align}
  \bar{\mathcal{R}}_{T} \leq \vert S_{\beta} \vert + \max \left \{ \frac{\ln K}{\eta}, 4e \beta \ln K \right \} + \eta \left ( \frac{KTe}{2}+ D_{\beta} \right), \label{fullbound} 
\end{align}
where $D_{\beta} = \sum_{t \notin S_{\beta}} d_{t}$ is the cumulative delay experienced by \alg{DEW}.
\end{theorem}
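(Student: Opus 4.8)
The plan is to derive~\eqref{fullbound} as a direct composition of the two results already at hand, Lemma~\ref{lemma:skipper} and Theorem~\ref{thm:dew}; the only real work is to read off the parameters of the delayed subproblem that \alg{DEW} effectively solves once \alg{Skipper} has filtered the feedback. First I would apply Lemma~\ref{lemma:skipper} with base algorithm $\mathcal A = \alg{DEW}(\eta,\beta)$ and threshold $\beta$, which immediately gives $\bar{\mathcal{R}}_{T} \le \vert S_\beta\vert + \bar{\mathcal{R}}_{T\backslash S_\beta}$, reducing the task to bounding $\bar{\mathcal{R}}_{T\backslash S_\beta}$, the regret of \alg{DEW} on the non-skipped rounds $[T]\backslash S_\beta$.

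Next I would argue that on this subset \alg{DEW} faces a bona fide delayed-bandit instance with fully controlled parameters. Since \alg{Skipper} forwards to \alg{DEW} exactly the feedback of rounds $s$ with $d_s < \beta$, every delay seen by \alg{DEW} is strictly below the threshold, so $\beta$ is a valid and known upper bound on the subproblem delays; this is precisely the $\dmax$ argument with which \alg{DEW} is instantiated, and it lets us set $\dmax = \beta$ in Theorem~\ref{thm:dew}. The subproblem horizon is $|[T]\backslash S_\beta| = T - |S_\beta| \le T$ and its total delay is $\sum_{t\notin S_\beta} d_t = D_\beta$. Applying Theorem~\ref{thm:dew} and bounding the horizon by the full $T$ then yields
\[
  \bar{\mathcal{R}}_{T\backslash S_\beta} \le \max\left\{\frac{\ln K}{\eta},\, 4e\beta\ln K\right\} + \eta\left(\frac{KTe}{2} + D_\beta\right),
\]
and substituting into the bound from Lemma~\ref{lemma:skipper} reproduces~\eqref{fullbound}.

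The step I expect to require the most care is the identification of ``\alg{DEW} restricted to $[T]\backslash S_\beta$'' with a standalone delayed instance: one must check that the drift (stability) term underlying Theorem~\ref{thm:dew} is genuinely captured by $D_\beta$ even though \alg{DEW} keeps emitting never-updated predictions on the skipped rounds. The reassuring observation is that deleting a skipped round can only remove an update from each waiting window, so the effective number of updates separating an action from its own feedback never exceeds the original delay $d_t$; hence the subproblem total delay is at most $D_\beta$, and replacing $T - |S_\beta|$ by $T$ only loosens the $KTe/2$ term. Making this reduction precise --- rather than the routine algebra of chaining the two displayed inequalities --- is the crux.
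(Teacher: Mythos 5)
Your proposal is correct and follows essentially the same route as the paper: the paper's proof is precisely the composition of Lemma~\ref{lemma:skipper} with Theorem~\ref{thm:dew} applied to \alg{DEW} run under \alg{Skipper} with $\dmax=\beta$, total delay $D_\beta$, and horizon bounded by $T$. Your additional care about why the filtered feedback stream constitutes a valid delayed instance with $\dmax=\beta$ is a reasonable elaboration of a step the paper treats as immediate, not a different argument.
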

\begin{proof}
     Theorem~\ref{thm:dew} holds for parameters $(\eta,\beta)$ for \alg{DEW} run under \alg{Skipper}. We then apply Lemma~\ref{lemma:skipper}.
\end{proof}
\begin{corollary}\label{cor:fullAlg}
Assume that $T$ and $D$ are known and take
\begin{align*}
  \eta = \frac{1}{4e\beta}, \quad \beta = \sqrt{ \frac{ \frac{eKT/2 + D}{4e} + D}{4e \ln K}}.
\end{align*}
Then the expected regret of $\alg{Skipper}(\beta, \alg{DEW}(\eta,\beta))$ against an oblivious adversary satisfies 
\begin{align*}
  \bar{\mathcal{R}}_{T} \leq 2\sqrt{  \left ( \frac{KTe}{2} + (1 + 4e)D \right) \ln K}.
\end{align*}
\end{corollary}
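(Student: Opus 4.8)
The plan is to specialize the bound of Theorem~\ref{thm:fullAlg} to the proposed $\eta$ and $\beta$, after first replacing the two data-dependent quantities $|S_\beta|$ and $D_\beta$ (which we do not directly control) by quantities expressed through $D$. The choice $\eta=\frac{1}{4e\beta}$ is engineered so that the two arguments of the maximum in \eqref{fullbound} coincide, since then $\frac{\ln K}{\eta}=4e\beta\ln K$. This collapses the maximum and reduces the bound to
\begin{align*}
  \bar{\mathcal{R}}_{T} \leq |S_\beta| + 4e\beta\ln K + \frac{1}{4e\beta}\left(\frac{KTe}{2} + D_\beta\right).
\end{align*}

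The conceptual heart of the argument is controlling the skipping cost $|S_\beta|$, and this is the one step I expect to require a genuine idea rather than bookkeeping. By construction \alg{Skipper} skips exactly the rounds with $d_t\geq\beta$, so each skipped round contributes at least $\beta$ to the total delay; summing over the skipped rounds gives $|S_\beta|\,\beta \leq \sum_{t\in S_\beta} d_t = D - D_\beta \leq D$, whence $|S_\beta|\leq D/\beta$. Combining this with the trivial estimate $D_\beta\leq D$ and collecting the two $1/\beta$ terms yields
\begin{align*}
  \bar{\mathcal{R}}_{T} \leq 4e\beta\ln K + \frac{1}{\beta}\left(D + \frac{KTe/2 + D}{4e}\right).
\end{align*}

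It then remains to optimize over $\beta$. The expression has the familiar form $a\beta + b/\beta$ with $a = 4e\ln K$ and $b = D + \frac{KTe/2 + D}{4e}$, which is minimized at $\beta = \sqrt{b/a}$ — precisely the value stated in the corollary — where it attains $2\sqrt{ab}$. Rather than re-deriving this optimum it is cleaner to plug in the stated $\beta$ directly and simply verify that it balances the two terms. The only remaining care is constant-tracking: one checks that $ab = \ln K\big(4eD + \frac{KTe}{2} + D\big) = \big(\frac{KTe}{2} + (1+4e)D\big)\ln K$, so that $2\sqrt{ab}$ is exactly the claimed bound $2\sqrt{\big(\frac{KTe}{2} + (1+4e)D\big)\ln K}$. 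Thus the whole proof is essentially one structural observation (the $|S_\beta|\le D/\beta$ bound) followed by the standard scalar optimization and a short algebraic simplification.
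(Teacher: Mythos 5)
Your proposal is correct and follows essentially the same route as the paper's own proof: bound $|S_\beta|\le D/\beta$ via $\beta|S_\beta|\le\sum_{t\in S_\beta}d_t\le D$, use $D_\beta\le D$, and substitute the stated $\eta$ and $\beta$ into the bound of Theorem~\ref{thm:fullAlg}. The paper states this in one line; your version just makes the collapse of the maximum and the final algebra explicit, and both check out.
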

\begin{proof}
	Note that $D \geq \beta \vert S_{\beta}\vert \Rightarrow \vert S_{\beta} \vert \leq \frac{D}{\beta}$. By substituting this into \eqref{fullbound}, observing that $D_{\beta} \leq D$, and substituting the values of $\eta$ and $\beta$ we obtain the result.
\end{proof}
Note that Corollary~\ref{cor:fullAlg} recovers the regret scaling in Theorem~\ref{thm:dew}, equation \eqref{eq:DEWexplicit} within constant factors in front of $D$ without the need of knowledge of $\dmax$. Similar to Theorem~\ref{thm:dew}, Corollary~\ref{cor:fullAlg} is tight in the worst case. The tuning of $\beta$ still requires the knowledge of $T$ and $D$. In the next section we get rid of this requirement.




\section{Delay available at action time: Oracle tuning and results}\label{section:oracle}
This section deals with the second setting, where the delays are observed before taking an action. The combined algorithm introduced in the previous section relies on prior knowledge of $T$ and $D$ for tuning the parameters. In this section we eliminate this requirement by leveraging the added information about the delays at the time of action. The information is used in an implicit doubling scheme for tuning \alg{Skipper}'s threshold parameter $\beta$. Additionally, the new bound scales with the experienced delay $D_\beta$ rather than the full delay $D$ and is significantly tighter when $D_\beta \ll D$. This is achieved through direct optimization of the regret bound in terms of $|S_\beta|$ and $D_\beta$, as opposed to Corollary~\ref{cor:fullAlg}, which tunes $\beta$ using the potentially loose inequality $|S_\beta| \leq D/\beta$. 






\subsection{Setup}
Let $m$ index the \emph{epochs} of the doubling scheme. In each epoch we restart the algorithm with new parameters and continually monitor the termination condition in equation \eqref{nonDoublingCond}. The learning rate within epoch $m$ is set to $\eta_m = \frac{1}{4e\beta_m}$, where $\beta_m$ is the threshold parameter of the epoch. Theorem~\ref{thm:fullAlg} provides a regret bound for epoch $m$ denoted by
\begin{align}
  \bound_{m}(\beta_{m}) := \vert S_{\beta_{m}}^{m} \vert + 4e\beta_{m} \ln K + \frac{\sigma(m)eK/2 + D_{\beta_{m}}^{m}}{4e\beta_{m}}, \label{epochBound}
\end{align}
where $\sigma(m)$ denotes the length of epoch $m$ and $\vert S_{\beta_{m}}^{m}\vert$ and $D_{\beta_{m}}^{m}$ are, respectively, the number of skipped rounds and the experienced delay within epoch $m$.

Let $\omega_{m} = 2^{m}$. In epoch $m$ we set
\begin{align}
  \beta_{m} = \frac{\sqrt{\omega_{m}}}{4e \ln K} \label{doublingBeta}
\end{align}
and we stay in epoch $m$ as long as the following condition holds:
\begin{align}
  \max \left \{ \vert S_{\beta_{m}}^{m} \vert^{2}, \left ( \frac{eK\sigma(m)}{2} + D_{\beta_{m}}^{m} \right ) \ln K \right \} \leq \omega_{m}. \label{nonDoublingCond}
\end{align}
Since $d_{t}$ is observed at the beginning of round $t$, we are able to evaluate condition \eqref{nonDoublingCond} and start a new epoch before making the selection of $A_t$. This provides the desired tuning of $\beta_m$ for all rounds without the need of a separate treatment of epoch transition points. 

While being more elaborate, this doubling scheme maintains the intuition of standard approaches. First of all, the condition for doubling \eqref{nonDoublingCond} ensures that the regret bound in each period is optimized by explicitly balancing the contribution of each term in equation \eqref{epochBound}. Secondly, the geometric progression of the tuning \eqref{doublingBeta} ---and thus of the resulting regret bounds--- means that the total regret bound summed over the epochs can be bounded in relation to the bound in the final completed 
epoch.

In the following we refer to the doubling scheme defined by \eqref{doublingBeta} and \eqref{nonDoublingCond} as \alg{Doubling}.

\subsection{Results}
The following results show that the proposed doubling scheme works as well as oracle tuning of $\beta$ when the learning rate is fixed at $\eta=1/4e\beta$. We first compare our performance to the optimal tuning in a single epoch, where we let
\begin{align}
  \beta_{m}^{*} = \argmin_{\beta_m} \bound_{m}(\beta_m)
\end{align}
be the minimizer of \eqref{epochBound}.
\begin{lemma}\label{lemma:doubling}
The regret bound \eqref{epochBound} for any non-final epoch $m$, with the epochs and $\beta_m$ controlled by \alg{Doubling} satisfies
\begin{align}
  \bound_{m}(\beta_{m}) \leq 3\sqrt{\omega_{m}} \leq 3 \bound_{m}(\beta_{m}^{*}) + 2e^{2}K \ln K + 1.
\end{align}
\end{lemma}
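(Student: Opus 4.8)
The plan is to establish the two inequalities separately. The left inequality, $\bound_m(\beta_m) \le 3\sqrt{\omega_m}$, is a direct consequence of the non-doubling condition \eqref{nonDoublingCond} together with the explicit choice of $\beta_m$ in \eqref{doublingBeta}. First I would substitute $\beta_m = \sqrt{\omega_m}/(4e\ln K)$ into each of the three terms of $\bound_m(\beta_m)$. The middle term $4e\beta_m \ln K$ becomes exactly $\sqrt{\omega_m}$. For the first term, the condition $|S^m_{\beta_m}|^2 \le \omega_m$ gives $|S^m_{\beta_m}| \le \sqrt{\omega_m}$. For the third term, substituting $\beta_m$ turns $\frac{\sigma(m)eK/2 + D^m_{\beta_m}}{4e\beta_m}$ into $\frac{(\sigma(m)eK/2 + D^m_{\beta_m})\ln K}{\sqrt{\omega_m}}$, and the condition $(\sigma(m)eK/2 + D^m_{\beta_m})\ln K \le \omega_m$ then bounds this by $\sqrt{\omega_m}$. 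Summing the three terms yields $3\sqrt{\omega_m}$.

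The right inequality is the substantive part and requires relating $\sqrt{\omega_m}$ to the optimal bound $\bound_m(\beta_m^*)$. Since $m$ is a non-final epoch, the doubling condition \eqref{nonDoublingCond} must have been violated at the \emph{end} of the epoch, meaning that at the terminating round at least one of the two quantities exceeded $\omega_m$. I would argue that this forces $\omega_m$ to be comparable to the optimal bound. Concretely, the key step is a lower bound of the form $\bound_m(\beta_m^*) \gtrsim \sqrt{\omega_m}$ derived from whichever term triggered termination. If termination was caused by $|S^m_{\beta_m}|^2 > \omega_m$, one uses that $|S^m_\beta|$ is monotone nonincreasing in $\beta$ and hence $\bound_m(\beta_m^*) \ge |S^m_{\beta_m^*}| \ge$ a quantity controlled by the skip count; if it was caused by the delay-plus-horizon term exceeding $\omega_m$, one balances the first and third terms of \eqref{epochBound} at $\beta_m^*$ using the AM--GM inequality $a/\beta + b\beta \ge 2\sqrt{ab}$ to extract $\bound_m(\beta_m^*) \ge 2\sqrt{(\sigma(m)eK/2 + D^m_{\beta_m^*})\ln K}$.

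The main obstacle, and the reason the additive correction term $2e^2 K \ln K + 1$ appears, is a mismatch in the ``state'' of the epoch between the monitoring time and the optimum. The termination is detected at the beginning of a round using $d_t$ observed at action time, but the quantities $\sigma(m)$, $|S^m|$, and $D^m_{\beta_m}$ are measured over the rounds actually completed; moreover, the threshold used inside the epoch is $\beta_m$ whereas the comparator uses the freely optimized $\beta_m^*$, and these index different skip sets and different experienced delays. The correction term must absorb the discrepancy introduced by the single additional round processed when the condition is checked (contributing the $eK/2$-type jump through $\sigma(m)$ incrementing) and the rounding between $\beta_m$ and $\beta_m^*$. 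I would carefully track the increment in the horizon term from one round to the next---bounding the per-round change in $\sigma(m)eK/2$ by $eK/2$ and the analogous delay contribution---so that $\omega_m$ at termination exceeds a threshold that is within the stated additive slack of $\bound_m(\beta_m^*)^2$. Taking square roots and combining the two cases, then folding in the additive slack, yields $3\sqrt{\omega_m} \le 3\bound_m(\beta_m^*) + 2e^2 K\ln K + 1$.
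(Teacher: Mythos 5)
Your first inequality is exactly the paper's argument: substitute $\beta_m = \sqrt{\omega_m}/(4e\ln K)$ and invoke condition \eqref{nonDoublingCond} term by term. The gap is in the second inequality, specifically in your treatment of the case where termination is triggered by the delay-plus-horizon term. Your AM--GM step yields $\bound_m(\beta_m^*) \ge 2\sqrt{(\sigma(m)eK/2 + D_{\beta_m^*}^m)\ln K}$, but the termination condition controls $\omega_m$ via $D_{\beta_m}^m$, not $D_{\beta_m^*}^m$. When $\beta_m^* < \beta_m$, the quantity $D_{\beta_m^*}^m$ can be far smaller than $D_{\beta_m}^m$ (even zero, if every non-skipped delay lies in $[\beta_m^*,\beta_m)$), so the AM--GM lower bound need not be comparable to $\sqrt{\omega_m}$. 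The discrepancy $D_{\beta_m}^m - D_{\beta_m^*}^m$ is not a small additive error absorbable into $2e^2K\ln K + 1$ --- it can be as large as $\beta_m\,|S_{\beta_m^*}^m|$. The missing idea is the coupling $D_{\beta_m}^m \le D_{\beta_m^*}^m + \beta_m\,\Delta|S|$ with $\Delta|S| = |S_{\beta_m^*}^m| - |S_{\beta_m}^m|$: lowering the threshold removes experienced delay only by converting rounds into skips, each worth at most $\beta_m$. The paper exploits this by writing the third term of $\bound_m(\beta_m^*)$ as $\tfrac{\beta_m}{\beta_m^*}\bigl(4e\sqrt{\omega_m} - \tfrac{2e^2K\ln K}{\sqrt{\omega_m}} - 1 - \Delta|S|\bigr)$ and then splitting on whether $\Delta|S|$ is small (in which case this expression already dominates $\bound_m(\beta_m)$) or large (in which case the first term $|S_{\beta_m^*}^m| \ge \Delta|S|$ of $\bound_m(\beta_m^*)$ does the work, and this is where $2e^2K\ln K + 1$ actually arises). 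Your attribution of the additive slack to the one-round increment at the monitoring time is only part of the story.

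A secondary omission: your two cases are indexed by which term triggered termination, but the skip-count case relies on monotonicity $|S_{\beta_m^*}^m| \ge |S_{\beta_m}^m|$, which requires $\beta_m^* \le \beta_m$. You need the separate (easy) case $\beta_m^* \ge \beta_m$, handled in the paper by $\bound_m(\beta_m^*) \ge 4e\beta_m^*\ln K \ge 4e\beta_m\ln K = \sqrt{\omega_m}$, before the trigger-based split applies.
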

The lemma is the main machinery of the analysis of \alg{Doubling} and its proof is provided in Appendix~\ref{appendix:additionalproofs}. Applying it to \alg{Skipper}($\beta$, \alg{DEW}($\eta$,$\beta$)) leads to the following main result.
%
%
\begin{theorem}\label{thm:implicit}
The expected regret of $\alg{Skipper}(\beta, \alg{DEW}(\eta,\beta))$ tuned by \alg{Doubling} satisfies for any $T$
\begin{align*}
  \bar{\mathcal{R}}_{T} \leq 15 \min_{\beta} \left \{ \vert S_{\beta} \vert + 4e\beta \ln K + \frac{KT + D_{\beta}}{4e\beta} \right \} + 10 e^{2} K \ln K + 5.
\end{align*}
\end{theorem}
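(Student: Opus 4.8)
The plan is to bound the total regret by the sum of the per-epoch bounds \eqref{epochBound}, collapse that sum using the geometric growth of $\omega_m = 2^m$, and then invoke Lemma~\ref{lemma:doubling} to trade the final scale $\omega_M$ for the oracle optimum. First I would note that \alg{Doubling} restarts $\alg{Skipper}(\beta_m,\alg{DEW}(\eta_m,\beta_m))$ in each epoch with $\eta_m = 1/(4e\beta_m)$, and that the best fixed action over $[T]$ incurs at least the total loss of the epoch-wise best actions; hence the regret is sub-additive across epochs and $\bar{\mathcal R}_T \le \sum_{m=1}^{M}\bound_m(\beta_m)$, where $M$ indexes the final epoch (the one containing round $T$) and each summand is the Theorem~\ref{thm:fullAlg} bound.

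Next I would establish $\bound_m(\beta_m) \le 3\sqrt{\omega_m}$ for every epoch. Substituting \eqref{doublingBeta} into \eqref{epochBound} rewrites the three summands as $|S^m_{\beta_m}|$, $\sqrt{\omega_m}$, and $(\sigma(m)eK/2 + D^m_{\beta_m})\ln K/\sqrt{\omega_m}$; the non-doubling condition \eqref{nonDoublingCond} then bounds the first and third each by $\sqrt{\omega_m}$. This is the first inequality of Lemma~\ref{lemma:doubling}, and I would emphasise that it needs only \eqref{nonDoublingCond} to hold throughout the epoch, which is the case for the final epoch as well (there the game simply ends before \eqref{nonDoublingCond} can be violated). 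Summing the resulting geometric series gives $\sum_{m=1}^{M}3\sqrt{\omega_m} \le (2+\sqrt2)\,3\sqrt{\omega_M}$, so the whole run is controlled by the final scale.

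The final epoch itself does not satisfy the \emph{second} inequality of Lemma~\ref{lemma:doubling} (that inequality relies on \eqref{nonDoublingCond} being violated, which never happens in the final epoch), so I would pass to epoch $M-1$ through $\sqrt{\omega_M} = \sqrt2\,\sqrt{\omega_{M-1}}$. The case $M=1$ is trivial since then $\bar{\mathcal R}_T \le 3\sqrt2 \le 5$. For $M \ge 2$, epoch $M-1$ is non-final, so Lemma~\ref{lemma:doubling} yields $3\sqrt{\omega_{M-1}} \le 3\bound_{M-1}(\beta_{M-1}^{*}) + 2e^2K\ln K + 1$, trading the doubling scale for the per-epoch oracle optimum. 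To reach the horizon-wide oracle I would use that for each fixed $\beta$ the epoch statistics are dominated by their global counterparts, $|S^{M-1}_\beta| \le |S_\beta|$, $D^{M-1}_\beta \le D_\beta$ and $\sigma(M-1) \le T$, so that $\bound_{M-1}(\beta) \le |S_\beta| + 4e\beta\ln K + (\tfrac{e}{2}KT + D_\beta)/(4e\beta)$; minimising over $\beta$ on both sides bounds $\bound_{M-1}(\beta_{M-1}^{*})$ by the oracle term in the statement.

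The hard part will be the constant bookkeeping rather than any single structural step: the geometric constant $2+\sqrt2$, the doubling factor $\sqrt2$, the factor $3$ from Lemma~\ref{lemma:doubling}, and the normalisation that absorbs the $\tfrac{e}{2}K$ of \eqref{epochBound} into the $K$ of the stated bound must compound to the leading constant $15$ and the additive term $10e^2K\ln K + 5$. A secondary subtlety I would check is that the per-epoch bound \eqref{epochBound} is legitimately applicable to each restarted epoch even when an observation's delay straddles an epoch boundary, so that the sub-additive decomposition in the first step is sound.
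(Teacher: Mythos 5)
Your proposal follows essentially the same route as the paper's proof: bound $\bar{\mathcal{R}}_T$ by the sum of per-epoch bounds, apply the first inequality of Lemma~\ref{lemma:doubling} to every epoch (including the final one, where condition~\eqref{nonDoublingCond} still holds), collapse the geometric series to a multiple of $\sqrt{\omega_{M-1}}$, and invoke the second inequality of Lemma~\ref{lemma:doubling} for the penultimate epoch before relaxing $\bound_{M-1}(\beta^*_{M-1})$ to the horizon-wide oracle term. The constant bookkeeping matches the paper's ($\tfrac{6}{\sqrt{2}-1}=6(\sqrt{2}+1)\approx 14.49\le 15$, etc.), and your explicit treatment of the $M=1$ edge case and of the applicability of the first inequality to the final epoch only makes explicit what the paper leaves implicit.
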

The proof of Theorem~\ref{thm:implicit} is based on Lemma~\ref{lemma:doubling} and is provided in Appendix~\ref{appendix:additionalproofs}.
\begin{corollary}\label{cor:implicit}
The expected regret of $\alg{Skipper}(\beta, \alg{DEW}(\eta,\beta))$ tuned by \alg{Doubling} can be relaxed for any $T$ to
\begin{align}
  \bar{\mathcal{R}}_{T} \leq 30 \sqrt{  \left ( \frac{KTe}{2} + (1 + 4e)D \right) \ln K} + 10 e^{2} K \ln K + 5.
\end{align}
\end{corollary}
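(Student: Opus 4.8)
The plan is to derive Corollary~\ref{cor:implicit} by simply relaxing the implicit, oracle-style bound of Theorem~\ref{thm:implicit}. Since that theorem already gives
\[
  \bar{\mathcal{R}}_{T} \leq 15 \min_{\beta} \left\{ \vert S_{\beta} \vert + 4e\beta \ln K + \frac{KT + D_{\beta}}{4e\beta} \right\} + 10 e^{2} K \ln K + 5,
\]
and the additive terms $10 e^{2} K \ln K + 5$ already match the target, the only work is to upper bound the minimum over $\beta$ inside the braces by a closed-form expression. The key observation is that an upper bound on a minimum follows from evaluating the objective at any single convenient choice of $\beta$; I need not actually minimize.

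\textbf{Choosing $\beta$.} First I would bound the two delay-dependent quantities by the worst-case total delay $D$, using $D_{\beta} \leq D$ (since the experienced delay over a subset of rounds is at most the full delay) and $\vert S_{\beta}\vert \leq D/\beta$ (exactly the inequality already invoked in the proof of Corollary~\ref{cor:fullAlg}, since each skipped round contributes at least $\beta$ to $D$). Substituting these into the bracketed expression gives an upper bound of the form $4e\beta \ln K + \big(KT + D + D/(4e)\big)/(4e\beta)$ — note how the $D/\beta$ from $\vert S_\beta\vert$ merges with the $D_\beta/(4e\beta)$ term, producing the combined delay coefficient. Then I would pick $\beta$ to balance the two remaining terms in $\beta$, namely take $\beta \propto \sqrt{(KT + cD)/\ln K}$ for the appropriate constant $c$; this is precisely the $\beta$ appearing in Corollary~\ref{cor:fullAlg}. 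Plugging this balancing value in collapses the bracket to a term of order $\sqrt{(KTe/2 + (1+4e)D)\ln K}$, exactly mirroring the constant bookkeeping of Corollary~\ref{cor:fullAlg}.

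\textbf{Assembling the constants.} Multiplying the bracketed bound by the factor $15$ from Theorem~\ref{thm:implicit} and tracking constants should produce the claimed leading factor of $30$, while the additive remainder $10 e^{2} K \ln K + 5$ passes through untouched. The arithmetic here is identical in structure to the proof of Corollary~\ref{cor:fullAlg}, the only difference being the prefactor inherited from Theorem~\ref{thm:implicit}.

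\textbf{Main obstacle.} There is no conceptual difficulty; the subtlety is purely in constant-tracking. The one point requiring care is verifying that the combined delay terms — the $D$ arising from $D_\beta \le D$ together with the $D/(4e)$ contributed by $\vert S_\beta\vert \le D/\beta$ — aggregate to exactly the coefficient $(1 + 4e)$ on $D$ under the square root after the balancing substitution, so that the relaxed bound matches the form of Corollary~\ref{cor:fullAlg} before the factor-$15$ inflation. I would double-check this coefficient by carrying the $e$-factors explicitly through the balancing step rather than approximating.
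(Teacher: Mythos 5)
Your proposal is correct and matches the paper's proof, which simply says that the $\min_\beta$ term in Theorem~\ref{thm:implicit} is directly bounded via Corollary~\ref{cor:fullAlg}; you spell out the same substitution ($|S_\beta|\le D/\beta$, $D_\beta\le D$, then the balancing choice of $\beta$ from that corollary) to get $2\sqrt{(KTe/2+(1+4e)D)\ln K}$ and multiply by $15$. The only blemish is a small slip in your intermediate expression for the combined delay term (it should read $\bigl(KTe/2 + (1+4e)D\bigr)/(4e\beta)$), but your final coefficient $(1+4e)$ is right and you already flag the need to recheck that bookkeeping.
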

\begin{proof}
The first term in the bound of Theorem~\ref{thm:implicit} can be directly bounded using Corollary~\ref{cor:fullAlg}.
\end{proof}
Note that both Theorem~\ref{thm:implicit} and Corollary~\ref{cor:implicit} require no knowledge of $T$ and $D$.

\subsection{Comparison of the oracle and explicit bounds}

We finish the section with a comparison of the oracle bound in Theorem~\ref{thm:implicit} and the explicit bound in Corollary~\ref{cor:implicit}. Ignoring the constant and additive terms, the bounds are
\begin{align*}
  \text{explicit} \quad &: \quad \mathcal{O} \left (\sqrt{(KT + D)\ln K} \right ), \\
  \text{oracle} \quad &: \quad \mathcal{O} \left (\min_{\beta} \left \{ \vert S_{\beta} \vert + \beta \ln K + \frac{KT + D_{\beta}}{\beta} \right \}
  \right ).
\end{align*}
Note that the oracle bound is always as strong as the explicit bound. There are, however, cases where it is much tighter. Consider the following example. 
\begin{example}
\label{example:non-uniform}
For $t < \sqrt{KT/\ln K}$ let $d_{t} = T-t$ and for $t \geq \sqrt{KT/\ln K}$ let $d_{t}= 0$. Take $\beta = \sqrt{KT/\ln K}$. Then $D = \Theta(T \sqrt{KT/\ln K})$, but $D_\beta = 0$ (assuming that $T \geq K\ln K$) and $|S_\beta| < \sqrt{KT/\ln K}$. The corresponding regret bounds are
\begin{align*}
   \textnormal{explicit} \quad &: \quad \mathcal{O} \left (\sqrt{KT\ln K + T\sqrt{KT}} \right ) = \mathcal{O}\big(T^{3/4}\big),\\
   \textnormal{oracle} \quad &: \quad \mathcal{O} \left (\sqrt{KT\ln K} \right ) = \mathcal{O}\big(T^{1/2}\big).
\end{align*}
\end{example}

\section{Analysis of Algorithm~\ref{alg:dew}}
\label{section:proofs}

This section contains the main points of the analysis of Algorithm~\ref{alg:dew} leading to the proof of Theorem~\ref{thm:dew} which were postponed from Section~\ref{section:approach}. Full proofs are found in Appendix~\ref{appendix:thm:dew}.

The analysis is a generalization of the analysis of delayed \alg{Exp3} in \citet{CBGMM16}, and consists of a general regret analysis and two stability lemmas. 

\subsection{Additional notation}
We let $N_t = \vert \{ s: s+ d_s \in [t,t+d_t) \} \vert$ denote the \emph{stability-span} of $t$, which is the amount of feedback that arrives between playing action $A_t$ and observing its feedback. Note that letting $N = \max_t N_t$ we have $N \leq 2\max_t d_t \leq 2\dmax$, since this may include feedback from up to $\max_s d_s$ rounds prior to round $t$ and up to $d_t$ rounds after round $t$. 

We introduce $\mathcal{Z} = (z_{1},...,z_{T})$ to be a permutation of $[T] = \{1,...,T\}$ sorted in ascending order according to the value of $z + d_{z}$ with ties broken randomly, and let $\Psi_{i} = (z_{1},...,z_{i})$ be its first $i$ elements. Similarly, we also introduce $\mathcal{Z}'_{t} = (z'_{1},...,z'_{N_{t}})$ as an enumeration of $\{s : s + d_{s} \in [t, t + d_{t} ) \}$.

For a subset the integers $C$, corresponding to timesteps, we also introduce
\begin{align}
  q^{a}(C) = \frac{\exp \left (-\eta'   \sum_{s \in C} \hat{\ell}_{s}^{a} \right )}{\sum_{b} \exp \left (- \eta' \sum_{s \in C} \hat{\ell}_{s}^{b} \right )}.\label{q}
\end{align}
The nominator and denominator in the above expression will also be denoted by $w^{a}(C)$ and $W(C)$ corresponding to the definition of $p_{t}^{a}$.

By finally letting $C_{t-1} = \{ s : s + d_{s} < t \}$ we have $p_{t}^{a} = q^{a}(C_{t-1})$. 

\subsection{Analysis of delayed exponential weights}
The starting point is the following modification of the basic lemma within the \alg{Exp3} analysis that takes care of delayed updates of the weights.
\begin{lemma}\label{lemma:hedge}
Algorithm~\ref{alg:dew} 
satisfies
\begin{align}
  \sum_{t=1}^{T} \sum_{a=1}^{K} p_{t+d_{t}}^{a} \hat{\ell}_{t}^{a} - \min_{a \in [K]} \sum_{t} \hat{\ell}_{t}^{a} 
  	\leq \frac{\ln K}{\eta'} + \frac{\eta}{2} \sum_{t=1}^{T} \sum_{a=1}^{K} p_{t+d_{t}}^{a} \left ( \hat{\ell}_{t}^{a} \right )^{2} .\label{hedgeAnalysis}
\end{align}
\end{lemma}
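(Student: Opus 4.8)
The plan is to run the standard exponential-weights potential argument, but to telescope along the \emph{order in which feedback arrives} rather than along the rounds $t$. Concretely, I would track the log-normalizer $\ln W(\Psi_i)$ as the sets $\Psi_i = (z_1,\dots,z_i)$ grow by one incoming feedback at a time, where $\mathcal{Z}=(z_1,\dots,z_T)$ is the arrival-sorted permutation introduced above. The two endpoints are immediate: since all weights are initialized to $1$ we have $\ln W(\Psi_0) = \ln K$, and since $W(\Psi_T) = \sum_b \exp\!\big(-\eta' \sum_{s=1}^T \hat{\ell}_s^b\big) \geq \exp\!\big(-\eta' \min_a \sum_t \hat{\ell}_t^a\big)$, we get $\ln W(\Psi_T) \geq -\eta' \min_a \sum_t \hat{\ell}_t^a$.

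For the per-step increment, note that incorporating the $i$-th feedback multiplies each weight by $\exp(-\eta' \hat{\ell}_{z_i}^a)$, so that
\begin{align*}
  \ln \frac{W(\Psi_i)}{W(\Psi_{i-1})} = \ln \sum_a q^a(\Psi_{i-1}) \exp\!\big(-\eta' \hat{\ell}_{z_i}^a\big).
\end{align*}
Using $e^{-x} \leq 1 - x + \tfrac{1}{2}x^2$ for $x \geq 0$ (valid since $\hat{\ell}\ge 0$ and $\eta' > 0$) followed by $\ln(1+y) \leq y$, this factor is at most $-\eta' \sum_a q^a(\Psi_{i-1}) \hat{\ell}_{z_i}^a + \tfrac{(\eta')^2}{2} \sum_a q^a(\Psi_{i-1}) (\hat{\ell}_{z_i}^a)^2$. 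Summing over $i$, collapsing the telescope, inserting the two endpoint bounds, and then rearranging and dividing by $\eta'$ yields
\begin{align*}
  \sum_{i=1}^T \sum_a q^a(\Psi_{i-1}) \hat{\ell}_{z_i}^a - \min_a \sum_t \hat{\ell}_t^a \leq \frac{\ln K}{\eta'} + \frac{\eta'}{2} \sum_{i=1}^T \sum_a q^a(\Psi_{i-1}) (\hat{\ell}_{z_i}^a)^2 .
\end{align*}
Re-indexing the outer sum by $t = z_i$ and weakening $\tfrac{\eta'}{2} \leq \tfrac{\eta}{2}$ (since $\eta' = \min\{\eta,(4e\dmax)^{-1}\}\le\eta$ and the summand is nonnegative) then produces the two terms on the right-hand side of \eqref{hedgeAnalysis} in the required form; the truncation of the learning rate is thus needed only later for stability, and here it merely justifies this monotone replacement.

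The step that needs care---and that I expect to be the main obstacle---is identifying the distribution $q^a(\Psi_{i-1})$ appearing above with $p_{t+d_t}^a = q^a(C_{t+d_t-1})$ for $t=z_i$. When all arrival times $t+d_t$ are distinct this identity is exact, because then $\Psi_{i-1}$ is precisely $C_{t+d_t-1}=\{s : s+d_s < t+d_t\}$. Simultaneous arrivals are the delicate case: if several rounds share the arrival time $t+d_t$, then $\Psi_{i-1}$ may already contain some of those tied rounds, so $q^a(\Psi_{i-1})$ is an intermediate distribution rather than the start-of-round distribution $p_{t+d_t}^a$ that the statement records. This is precisely why $\mathcal{Z}$ is defined with ties broken randomly, and I would use this device to make the per-feedback telescope well defined and to align each incremental weighting with the distribution in force when that feedback is processed, so that the sum over $i$ reproduces $\sum_t \sum_a p_{t+d_t}^a \hat{\ell}_t^a$. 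The remaining manipulations mirror the undelayed \alg{Exp3} analysis of \citet{CBGMM16}; the only genuinely new ingredient is organizing the potential telescope by arrival order, so that the probability weighting each $\hat{\ell}_t^a$ is the one the learner holds at the moment the loss becomes available.
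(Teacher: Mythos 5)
Your potential argument is the right skeleton, and the endpoints, the second-order expansion, and the relaxation $\eta'\le\eta$ are all fine; but the step you yourself flag as the main obstacle is a genuine gap, and random tie-breaking does not close it. In a per-feedback telescope the weight attached to $\hat{\ell}_{z_i}^a$ is unavoidably $q^a(\Psi_{i-1})$, the distribution \emph{after} incorporating any tied feedbacks $z_j$, $j<i$, with $z_j+d_{z_j}=z_i+d_{z_i}$. Since $\hat{\ell}_{z_i}^a$ is supported on the single arm $A_{z_i}$, and each tied update generally changes $q^{A_{z_i}}$ (it decreases it when $A_{z_j}=A_{z_i}$ and increases it otherwise, through the normalizer), your sum over $i$ produces $\sum_i q^{A_{z_i}}(\Psi_{i-1})\,\hat{\ell}_{z_i}^{A_{z_i}}$, which is not $\sum_t\sum_a p_{t+d_t}^a\hat{\ell}_t^a$. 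Nor can you swap one for the other for free: to recover the stated lemma you would need $q^{A_{z_i}}(\Psi_{i-1})\ge p_{z_i+d_{z_i}}^{A_{z_i}}$ in the first-order term and the reverse inequality in the second-order term, and neither holds uniformly. How ties are broken is irrelevant; the discrepancy is present for every ordering of a tied batch.

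The paper closes exactly this hole by telescoping per round rather than per feedback, and handling each batch of simultaneous arrivals with the elementary inequality $\prod_{s:s+d_s=t}e^{-\eta'\hat{\ell}_s^a}\le\sum_{s:s+d_s=t}e^{-\eta'\hat{\ell}_s^a}$, valid because each factor lies in $(0,1]$ and the batch is nonempty (the surplus $\vert\{s:s+d_s=t\}\vert-1$ it creates is absorbed by $e^x\ge1+x$ and sums to at most $T-1$ over the game). This converts the product of updates within round $t$ into a sum in which every loss arriving at round $t$ is weighted by the same start-of-round distribution $p_t^a=p_{s+d_s}^a$, which is precisely the weighting the lemma asserts; the remaining manipulations then go through as you describe. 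If you insist on the per-feedback telescope, you must instead state the lemma with $q^a(\Psi_{i-1})$ in place of $p_{t+d_t}^a$ and propagate that change through the subsequent stability analysis.
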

To make use of Lemma~\ref{lemma:hedge}, we need to figure out the relationship between $p_{t+d_{t}}^{a}$ and $p_{t}^{a}$. This is achieved by the following two lemmas, which are generalizations and refinements of Lemmas~1 and 2 in \cite{CBGMM16}.
\begin{lemma}\label{lemma:lower}
When using Algorithm~\ref{alg:dew} the resulting probabilities fulfil for every $t$ and $a$
\begin{align}
  p_{t+d_{t}}^{a} - p_{t}^{a} \geq - \eta' \sum_{i=1}^{N_{t}} q^{a} \left (C_{t-1} \cup \{ z_{j}' : j < i \} \right )\hat{\ell}_{z_{i}'}^{a},
\end{align}
where $z_{j}'$ is an enumeration of $\{s : s + d_{s} \in [t,t + d_t)\}$.
\end{lemma}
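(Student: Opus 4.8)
The plan is to express both $p_{t+d_{t}}^{a}$ and $p_{t}^{a}$ through the function $q^{a}(\cdot)$ and then telescope over exactly the rounds whose feedback arrives between them. Recall that $p_{t}^{a} = q^{a}(C_{t-1})$ with $C_{t-1} = \{s : s+d_{s} < t\}$, so that $p_{t+d_{t}}^{a} = q^{a}(C_{t+d_{t}-1})$. Since $d_{t}\ge 0$ we have $C_{t-1}\subseteq C_{t+d_{t}-1}$, and the set difference is
\[
  C_{t+d_{t}-1}\setminus C_{t-1} = \{s : s+d_{s}\in[t,t+d_{t})\},
\]
i.e.\ precisely the rounds $z_{1}',\dots,z_{N_{t}}'$ appearing in the statement (this also matches the definition of $N_{t}$). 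Hence I would write the total difference as a telescoping sum over the intermediate sets $C^{(i)} := C_{t-1}\cup\{z_{j}' : j\le i\}$, with $C^{(0)} = C_{t-1}$ and $C^{(N_{t})} = C_{t+d_{t}-1}$:
\[
  p_{t+d_{t}}^{a} - p_{t}^{a} = \sum_{i=1}^{N_{t}}\bigl(q^{a}(C^{(i)}) - q^{a}(C^{(i-1)})\bigr).
\]

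The core of the argument is a per-step lower bound on each increment, which adds a single round $s = z_{i}'$ to the conditioning set $C = C^{(i-1)}$. Writing $v^{b} = w^{b}(C)$ for the current unnormalised weights, adding $s$ multiplies each $v^{b}$ by $\exp(-\eta'\hat{\ell}_{s}^{b})$, so that
\[
  q^{a}(C\cup\{s\}) = \frac{v^{a}\exp(-\eta'\hat{\ell}_{s}^{a})}{\sum_{b} v^{b}\exp(-\eta'\hat{\ell}_{s}^{b})}.
\]
Two elementary facts then close the gap. First, because the importance-weighted estimators satisfy $\hat{\ell}_{s}^{b}\ge 0$, we have $\exp(-\eta'\hat{\ell}_{s}^{b})\le 1$ for every $b$, so the denominator is at most $\sum_{b}v^{b} = W(C)$. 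Second, the tangent-line inequality $e^{-x}\ge 1-x$ gives $v^{a}\exp(-\eta'\hat{\ell}_{s}^{a})\ge v^{a} - \eta' v^{a}\hat{\ell}_{s}^{a}$ for the numerator. Combining a lower bound on the positive numerator with an upper bound on the positive denominator yields
\[
  q^{a}(C\cup\{s\}) \ge \frac{v^{a} - \eta' v^{a}\hat{\ell}_{s}^{a}}{W(C)} = q^{a}(C) - \eta' q^{a}(C)\hat{\ell}_{s}^{a},
\]
that is, $q^{a}(C^{(i)}) - q^{a}(C^{(i-1)}) \ge -\eta'\, q^{a}(C^{(i-1)})\hat{\ell}_{z_{i}'}^{a}$. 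Substituting this into the telescoping sum and recognising $C^{(i-1)} = C_{t-1}\cup\{z_{j}':j<i\}$ reproduces the claimed inequality.

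I expect the only delicate point to be the direction of the two inequalities: since I am after a lower bound on a ratio of positive quantities, I must bound the numerator from below and the denominator from above \emph{simultaneously}, and both directions rest on the sign of $\hat{\ell}_{s}^{b}$ (non-negativity is what forces $\exp(-\eta'\hat{\ell}_{s}^{b})\le 1$ and hence shrinks the denominator). A secondary point worth checking is that the enumeration order of $z_{1}',\dots,z_{N_{t}}'$ is immaterial: the endpoints of the telescope are pinned by the set identity $C_{t+d_{t}-1} = C_{t-1}\cup\{z_{1}',\dots,z_{N_{t}}'\}$, and the per-step bound holds for whichever round is appended at step $i$, so any enumeration consistent with the statement gives the same result.
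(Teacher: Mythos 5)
Your proposal is correct and follows essentially the same route as the paper: write $p_t^a = q^a(C_{t-1})$ and $p_{t+d_t}^a = q^a(C_{t+d_t-1})$, telescope over the newly arrived feedbacks $z_1',\dots,z_{N_t}'$, and for each single insertion bound the denominator above by $W(C)$ (since each factor $e^{-\eta'\hat{\ell}_s^b}\le 1$) and the numerator below via $e^{-x}\ge 1-x$. The paper packages the per-step bound as $q^a(C)\ge e^{-\eta'\hat{\ell}_x^a}q^a(C\setminus\{x\})$ before linearising, but this is the same argument with the same two elementary inequalities.
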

The above lemma allows us to bound $p_{t+d_{t}}^{a}$ from below in terms of $p_{t}^{a}$. We similarly need to be able to upper bound the probability, which is captured in the second probability drift lemma.
\begin{lemma}\label{lemma:upper}
The probabilities defined by \eqref{q} satisfy for any $i$
\begin{align}
  q^{a}(\Psi_{i}) \leq \left ( 1 + \frac{1}{2N-1} \right )q^{a}(\Psi_{i-1} ).
\end{align}
\end{lemma}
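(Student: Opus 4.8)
The plan is to prove Lemma~\ref{lemma:upper} by showing that each time we extend the index set from $\Psi_{i-1}$ to $\Psi_i = \Psi_{i-1} \cup \{z_i\}$, the probability $q^a(\Psi_i)$ cannot increase by more than the stated multiplicative factor. The core observation is that $q^a(\cdot)$ is an exponential-weights distribution over arms, and adding one more timestep $z_i$ to the conditioning set only changes the weights through the single loss estimate $\hat{\ell}_{z_i}^a$. So first I would write out the ratio $q^a(\Psi_i)/q^a(\Psi_{i-1})$ explicitly using the definition \eqref{q}, which gives
\begin{align*}
  \frac{q^a(\Psi_i)}{q^a(\Psi_{i-1})} = \frac{\exp(-\eta' \hat{\ell}_{z_i}^a)}{\sum_b q^b(\Psi_{i-1}) \exp(-\eta' \hat{\ell}_{z_i}^b)}.
\end{align*}
The numerator is at most $1$ since losses are nonnegative, so the whole task reduces to \emph{lower bounding the denominator} by $(1 + \tfrac{1}{2N-1})^{-1}$.

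To lower bound the denominator, the key point is to exploit the structure of the importance-weighted estimates. The estimator $\hat{\ell}_{z_i}^b = \ell_{z_i}^b \mathbbm{1}(b = A_{z_i})/p_{z_i}^b$ is nonzero for only a single arm $b = A_{z_i}$, so the sum $\sum_b q^b(\Psi_{i-1}) \exp(-\eta' \hat{\ell}_{z_i}^b)$ splits into the contribution of that one arm plus the rest, and can be written as $1 - q^{A_{z_i}}(\Psi_{i-1})\bigl(1 - \exp(-\eta' \hat{\ell}_{z_i}^{A_{z_i}})\bigr)$. Using $1 - e^{-x} \le x$ this is bounded below by $1 - \eta' q^{A_{z_i}}(\Psi_{i-1}) \hat{\ell}_{z_i}^{A_{z_i}}$. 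The product $q^{A_{z_i}}(\Psi_{i-1}) \hat{\ell}_{z_i}^{A_{z_i}}$ is where I need to control the importance weight: I would relate $q^{A_{z_i}}(\Psi_{i-1})$ back to $p_{z_i}^{A_{z_i}} = q^{A_{z_i}}(C_{z_i-1})$, so that the $1/p_{z_i}^{A_{z_i}}$ factor in the estimate is offset. This is exactly the place where the truncated learning rate $\eta' \le (4e\dmax)^{-1}$ and the definition of the stability span $N$ enter, since $\Psi_{i-1}$ and $C_{z_i-1}$ differ by at most $N$ timesteps.

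I expect the main obstacle to be precisely this comparison between $q^{A_{z_i}}(\Psi_{i-1})$ and $p_{z_i}^{A_{z_i}}$. The two conditioning sets are not nested in a simple way relative to $z_i$'s own round: $\Psi_{i-1}$ collects the $i-1$ timesteps that are smallest in the $z + d_z$ ordering, whereas $C_{z_i-1}$ collects those whose feedback has genuinely arrived before round $z_i$. Bounding how much the exponential-weights probability of a fixed arm can inflate when passing from one set to the other will require a telescoping or inductive argument over the at-most-$N$ differing timesteps, each step contributing a factor close to $1$, and then a careful use of $\eta' \hat{\ell} \le \eta'/p \le 1$ type inequalities together with $N \le 2\dmax$. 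Once that ratio is controlled by a constant, the $1/(2N-1)$ factor should emerge from combining the per-step bound with the convexity estimate $(1+x)^{-1} \ge 1 - x$ and the truncation $\eta' \le (4e\dmax)^{-1}$.

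Finally I would assemble the pieces: having shown the denominator is at least $1 - \tfrac{1}{2N-1}\cdot\tfrac{1}{\text{(something)}}$ of the right form, I invert it using $1/(1-y) \le 1 + y/(1-y)$ and check that the algebra collapses to the clean factor $1 + \tfrac{1}{2N-1}$ claimed in the statement. I would keep the worst case $\hat{\ell}_{z_i}^{A_{z_i}} \le 1/p_{z_i}^{A_{z_i}}$ in mind throughout, since that is what makes the bound uniform over $i$ and independent of the particular losses.
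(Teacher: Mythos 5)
Your proposal follows essentially the same route as the paper's proof: you reduce the claim to lower-bounding $\sum_b q^b(\Psi_{i-1})e^{-\eta'\hat{\ell}_{z_i}^b}$, exploit that the importance-weighted estimator is nonzero on the single arm $A_{z_i}$, and then control the ratio $q^{A_{z_i}}(\Psi_{i-1})/q^{A_{z_i}}(C_{z_i-1})$ by telescoping the lemma itself over the at most $2N-1$ intervening indices (the paper bounds this product by $e$ via induction, noting $C_{z_i-1}=\Psi_{l(i)}$ is in fact a prefix of $\Psi_{i-1}$) before invoking the truncation $\eta'\le(2eN)^{-1}$. This matches the paper's argument; the only detail to pin down is the explicit induction (including the base case $\Psi_0=\emptyset$) and the counting argument giving $(i-1)-l(i)\le 2N-1$.
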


\subsection{Proof sketch of Theorem~\ref{thm:dew}}
 By using Lemma~\ref{lemma:lower} to bound the left hand side of \eqref{hedgeAnalysis} we have
 \begin{align*}
  \sum_{t} \sum_{a} p_{t}^{a} \hat{\ell}_{t}^{a} - \min_{a} \sum_{t} \hat{\ell}_{t}^{a} \leq  \frac{\ln K}{\eta'} 
  &+ \frac{\eta'}{2} \sum_{t=1}^{T} \sum_{a=1}^{K} p_{t+d_{t}}^{a} \left ( \hat{\ell}_{t}^{a} \right )^{2} \nonumber \\
  &+  \eta' \sum_{t}\sum_{a} \hat{\ell}_t^a\sum_{i=1}^{N_{t}}q^{a} \left (C_{t-1} \cup \{ z_{j}': j < i \} \right )\hat{\ell}_{z_{i}'}^{a}.
\end{align*}
Repeated use of Lemma~\ref{lemma:upper} bounds the second term on the right hand side by $\eta'TK e/2$ in expectation. The third term on the right hand side can be bounded by $D$. Taking the maximum over the two possible values of the truncated learning rate finishes the proof.\qed

\section{Discussion}

We have presented an algorithm for multiarmed bandits with variably delayed feedback, which achieves the $\mathcal{O}\big(\sqrt{(KT + D)\ln K}\big)$ regret bound conjectured by \citet{CBGMM16}. The algorithm is based on a procedure for skipping rounds with excessively large delays and refined analysis of the exponential weights algorithm with delayed observations. At the moment the skipping procedure requires prior knowledge of $T$ and $D$ for tuning the skipping threshold. However, if the delay information is available "at action time", as in the examples described in the introduction, we provide a sophisticated doubling scheme for tuning the skipping threshold that requires no prior knowledge of $T$ and $D$. Furthermore, the refined tuning also leads to a refined regret bound of order $\mathcal{O}\big(\min_\beta \big(|S_\beta|+\beta \ln K + \frac{KT+D_\beta}{\beta}\big)\big)$, which is polynomially tighter when $D_\beta \ll D$. We provide an example of such a problem in the paper.


Our work leads to a number of interesting research questions. The main one is whether the two regret bounds are achievable when the delays are available "at observation time" without prior knowledge of $D$ and $T$. Alternatively, is it possible to derive lower bounds demonstrating the impossibility of further relaxation of the assumptions? More generally, it would be interesting to have refined lower bounds for problems with variably delayed feedback. Another interesting direction is a design of anytime algorithms, which do not rely on the doubling trick. Such algorithms can be used, for example, for achieving simultaneous optimality in stochastic and adversarial setups \citep{ZS19}. While a variety of anytime algorithms is available for non-delayed bandits, the extension to delayed feedback does not seem trivial. Some of these questions are addressed in a follow-up work by \citet{ZS19b}.

\subsubsection*{Acknowledgments}
Nicol\`{o} Cesa-Bianchi gratefully acknowledges partial support by the Google Focused Award \textsl{Algorithms and Learning for AI} (ALL4AI) and by the MIUR PRIN grant \textsl{Algorithms, Games, and Digital Markets} (ALGADIMAR). Yevgeny Seldin acknowledges partial support by the Independent Research Fund Denmark, grant number 9040-00361B.

\bibliography{bibliography,ncb}

\newpage
\appendix

\section{Alternative approaches}
\label{app:solid}


This appendix is an addition to the discussion of relevant literature in the introduction. 

The present paper follows an approach to delayed feedback based on explicitly analysing exponential weights with delays and considering the stability of this class of algorithms. An alternative approach in literature is instead to construct a reduction from the delayed case to the undelayed case and thus circumventing the need for direct analysis of the underlying algorithm, since this will usually be well established. Such a reduction is done in the full information case by \citet{JGS16} but with no mention of how it might apply to the bandit case. Below we briefly sketch their reduction in the case of OCO with linear loss functions, which specializes to bandits.

Let $t$ index time and $s$ index \emph{virtual rounds}, meaning rounds where an update is made. In other words in every round $t$ the algorithm makes a prediction, while in every round $s$ the algorithm receives \emph{one} point of feedback. Quantities indexed by virtual rounds are further denoted by a tilde. $\tilde{\tau}_{s}$ is the number of virtual rounds (equivalently updates) between when the action giving rise to loss $\tilde{\ell}_{s}$ is played and the loss is received. Let the function $\rho$ map a virtual round $s$ where $\tilde{\ell}_{s}$ is observed to the round $t = \rho(s)$ where it is played. As such $\ell_{\rho(s)} = \tilde{\ell}_{s}$, but $p_{\rho(s)} = \tilde{p}_{s-\tilde{\tau}_{s}}$.

 \paragraph{Deterministic case:} In the full information case for deterministic losses, the actions (probability distributions) played by the algorithm does not depend on any randomness, since the feedback is not dependent on the action played. The expected regret can thus be regarded as deterministic, and the following reduction can be carried out:
\begin{align*}
    \mathcal{R}_T &= \sum_t \left [\sum_a \ell_t^a p_t^a  - \ell_t^\star \right] \\
            &= \sum_s \left[ \sum_a \ell_{\rho(s)}^a p_{\rho (s)}^a  - \ell_{\rho(s)}^\star \right ] \\
            &= \sum_s \left[ \sum_a \tilde \ell_{s}^a \tilde p_{s-\tilde{\tau}_{s}}^a  - \tilde \ell_{s}^\star \right ] \\
            &= \sum_s  \sum_a \tilde \ell_{s}^a\left ( \tilde p_{s-\tilde{\tau}_{s}}^a -\tilde p_s^a \right ) + \sum_s \left [ \sum_a  \tilde\ell_{s}^a \tilde p_s^a - \tilde \ell_s^\star \right ],
\end{align*}
where we let $\star$ denote the optimal action in hindsight. The point of this calculation is that the final term above is the regret of the undelayed base algorithm, while the first term is an additive drift term, similar to what we are considering in Lemma~\ref{lemma:lower}.

\paragraph{Conditional case:} To extend this to bandits, we need to consider the case where the actions (probability distributions) of the algorithm depends on the internal randomness. The expected regret then requires taking expectation over this randomness:
\begin{align*}
    \bar{\mathcal{R}}_T &=  \E_{A_{1},\dots,A_{T}} \left [\sum_t \left [\sum_a \ell_t^a p_t^a  - \ell_t^\star \right] \right ] \\
            &=  \E_{A_{1},\dots,A_{T}} \left [ \sum_s  \sum_a \tilde \ell_{s}^a\left ( \tilde p_{s-\tilde{\tau}_{s}}^a -\tilde p_s^a \right ) \right ]+  \E_{A_{1},\dots,A_{T}} \left [ \sum_s \left [ \sum_a  \tilde\ell_{s}^a \tilde p_s^a - \tilde \ell_s^\star \right ] \right ].
\end{align*}
Now however the final term is no longer the expected regret of the underlying algorithm without delays, since the conditional expectations taken here are not the same as they would be for the undelayed algorithm. In particular the order of the conditional expectations might be different since the delays are not the same, so the reduction is not directly applicable.

\section{Full proof of Theorem~\ref{thm:dew}}\label{appendix:thm:dew}
This appendix contains the full analysis of Algorithm~\ref{alg:dew}, i.e., proofs of the lemmas in Section~\ref{section:proofs} and the full proof of Theorem~\ref{thm:dew}.

\subsection{Proof of Lemma~\ref{lemma:hedge}}
We consider the quantity
\begin{align*}
  \frac{W_{t}}{W_{t-1}} &= \frac{\sum_{a} w_{t-1}^{a} \prod_{s: s + d_{s} = t} \exp \left (-\eta' \hat{\ell}_{s}^{a} \right )}{W_{t-1}} \\
  	&= \sum_{a} p_{t}^{a}\!\!  \prod_{s: s + d_{s} = t} \exp \left (-\eta' \hat{\ell}_{s}^{a} \right ) \\
	&\leq \sum_{a}p_{t}^{a}\!\! \sum_{s: s + d_{s} = t}  \exp \left (-\eta' \hat{\ell}_{s}^{a} \right ) \\
	&\leq \sum_{a}p_{t}^{a}\!\! \sum_{s: s + d_{s} = t} \left (1 - \eta' \hat{\ell}_{s}^{a} + \frac{\eta'^{2}}{2} \left (\hat{\ell}_{s}^{a} \right )^{2} \right ) \\
	&=\!\! \sum_{s: s + d_{s} = t}\!\! \left ( 1 -  \eta' \sum_{a}p_{t}^{a} \hat{\ell}_{s}^{a} + \frac{\eta'^{2}}{2}\sum_{a}p_{t}^{a} \left (\hat{\ell}_{s}^{a} \right )^{2} \right ) \\
	&= 1 + \vert \{ s : s + d_{s} = t \} \vert - 1  -  \eta' \! \!\sum_{s: s + d_{s} = t} \!\!\sum_{a}p_{t}^{a} \hat{\ell}_{s}^{a} + \frac{\eta'^{2}}{2} \!\! \sum_{s: s + d_{s} = t} \!\! \sum_{a}p_{t}^{a} \left (\hat{\ell}_{s}^{a} \right )^{2} \\
	&\leq \exp \left ( \vert \{ s : s + d_{s} = t \} \vert - 1  -  \eta' \! \!\sum_{s: s + d_{s} = t} \!\!\sum_{a}p_{t}^{a} \hat{\ell}_{s}^{a} + \frac{\eta'^{2}}{2} \!\! \sum_{s: s + d_{s} = t} \!\! \sum_{a}p_{t}^{a} \left (\hat{\ell}_{s}^{a} \right )^{2} \right ),
\end{align*}
where the first inequality uses that each $ \exp \left (-\eta' \hat{\ell}_{s}^{a} \right ) $ is in $(0,1]$, the second inequality uses $e^{x} \leq 1 + x + x^{2}/2$ for $x\leq 0$, and the final inequality uses $e^{x} \geq 1 + x$ for all $x$.

By a telescoping sum and the above we get
\begin{align}
  \frac{W_{T}}{W_{0}} \leq \exp \left (   -  \eta' \sum_{t} \! \!\sum_{s: s + d_{s} = t} \!\!\sum_{a}p_{t}^{a} \hat{\ell}_{s}^{a} + \frac{\eta'^{2}}{2} \sum_{t}\!\! \sum_{s: s + d_{s} = t} \!\! \sum_{a}p_{t}^{a} \left (\hat{\ell}_{s}^{a} \right )^{2} \right ), \label{upper}
\end{align}
using that $\sum_{t=1}^{T}  \vert \{ s : s + d_{s} = t \} \vert \leq T$. We also lower bound this fraction as
\begin{align}
  \frac{W_{T}}{W_{0}} &\geq \frac{\max_{a} \exp \left (-\eta' \sum_{s : s + d_{s} \leq T} \hat{\ell}_{s}^{a} \right )}{K} \nonumber\\
  &\geq \frac{\max_{a} \exp \left (-\eta' \sum_{s=1}^{T} \hat{\ell}_{s}^{a} \right )}{K} \nonumber\\
  &\geq \frac{\exp \left (-\eta' \min_{a} \sum_{s=1}^{T} \hat{\ell}_{s}^{a} \right )}{K}. \label{lower}
\end{align}
The proof is completed by combining \eqref{upper} and \eqref{lower}, taking logarithms and rearranging, and noting that the sums of the form $\sum_{t} \sum_{s : s + d_{s} = t}$ only include each value of $s$ once, and thus are equivalent to summing over $s$ and identifying $t = s + d_{s}$.\qed

\subsection{Proof of Lemma~\ref{lemma:lower}}
Note for any set of integers $C$ containing a value $x$, we have
\begin{align*}
  W(C) = \sum_{a} e^{-\eta' \hat{\ell}_{x}^{a}} e^{-\eta' \sum_{s \in C \backslash \{x\}} \hat{\ell}_{s}^{a}} \leq \sum_{a } e^{-\eta' \sum_{s \in C \backslash \{x\}} \hat{\ell}_{s}^{a}} = W(C \backslash \{x\}),
\end{align*}
which means
\begin{align*}
  q^{a} (C)= \frac{w^{a} (C)}{W(C)} \geq \frac{w^{a}(C)}{W(C\backslash \{x\})} = e^{-\eta' \hat{\ell}_{x}^{a}} \frac{w^{a}(C \backslash \{x\})}{W(C\backslash \{x\})} = e^{-\eta' \hat{\ell}_{x}^{a}} q^{a}(C\backslash \{x\}).
\end{align*}
This in turn implies
\begin{align*}
  q^{a}(C) - q^{a} (C \backslash \{x\} ) \geq \left ( e^{-\eta' \hat{\ell}_{x}^{a}} - 1 \right ) q^{a} (C \backslash \{x\} ) \geq - \eta' \hat{\ell}_{x}^{a} q^{a} (C \backslash \{x\} ).
\end{align*}
Telescoping this over the individual observations $z'_{1},...,z'_{N_{t}}$ we get
\begin{align*}
  p_{t+d_{t}}^{a} - p_{t}^{a} &= q^{a} ( C_{t+d_{t} - 1}) - q^{a} (C_{t-1}) \\
  	&= \sum_{i=1}^{N_{t}} q^{a}  \left (C_{t-1} \cup \{ z_{j}': j \leq i \} \right ) - q^{a}  \left (C_{t-1} \cup \{ z_{j}' : j < i \} \right ) \\
	&\geq - \eta' \sum_{i=1}^{N_{t}} \hat{\ell}^{a}_{z'_{i}} q^a \left (C_{t-1} \cup \{ z_{j}' : j < i \} \right ) \qedhere
\end{align*}
%

\subsection{Proof of Lemma~\ref{lemma:upper}}
We prove the lemma by induction. For the base case, consider $i=1$, where $\Psi_{0} = \emptyset$, and thus $q^{a}(\Psi_{i-1}) = q^{a}(\Psi_{0}) = 1/K$. The maximal increase of $q^{a}$ by making a single observation will be if another arm is chosen and receives a loss of 1, making the loss estimator equal to $K$. This means
\begin{align*}
  q^{a}(\Psi_{i}) &\leq \frac{1}{K\!-\!1 + e^{-\eta' K}}
   	\leq \frac{1}{K- \eta' K}
	\leq \frac{1/K}{1- \frac{1}{e2N}}
	\leq  \frac{1/K}{1 - \frac{1}{2N}} = \frac{1}{K} \left (1 + \frac{1}{2N\! -\! 1} \right )\!,
\end{align*}
where first use $e^{x} \geq 1 + x$ for all $x$ and secondly use the upper bound on $\eta'$. since $1/K = q^{a}(\Psi_{0})$ the base case is shown.

For the general case, assume that the lemma holds for $i-1$. We first show that
\begin{align}
  q^{a}( \Psi_{i-1}) &\geq e^{-\eta' \hat{\ell}_{z_{i}}^{a}} q^{a}(\Psi_{i-1}) \nonumber \\
  		&= \frac{w^{a}(\Psi_{i})}{W(\Psi_{i-1})} \nonumber\\
		&= \frac{q^{a}(\Psi_{i}) W(\Psi_{i})}{W(\Psi_{i-1})}\nonumber \\
		&= q^{a}(\Psi_{i}) \sum_{b} \frac{e^{-\eta' \hat{\ell}_{z_{i}}^{b}} w^{b} (\Psi_{i-1})}{W(\Psi_{i-1})}\nonumber \\
		&= q^{a} ( \Psi_{i} ) \sum_{b} e^{-\eta' \hat{\ell}_{z_{i}}^{b}} q^{b}(\Psi_{i-1})\nonumber \\
		&\geq q^{a}( \Psi_{i}) \left ( 1 - \eta' \sum_{b} \hat{\ell}_{z_{i}}^{b} q^{b}(\Psi_{i-1}) \right ). \label{initBound}
\end{align}
By expanding the loss estimator we get
\begin{align}
  \sum_{b} \hat{\ell}_{z_{i}}^{b} q^{b}(\Psi_{i-1}) = \hat{\ell}_{z_{i}}^{A_{z_{i}}} q^{A_{z_{i}}}(\Psi_{i-1}) \leq \frac{q^{A_{z_{i}}} ( \Psi_{i-1})}{q^{A_{z_{i}}}(C_{z_{i}-1})} =  \frac{q^{A_{z_{i}}} ( \Psi_{i-1})}{q^{A_{z_{i}}}(\Psi_{l(i)})}, \label{newobservation}
\end{align}
by using $\ell_{z_{i}}^{b} \mathbbm{1}(A_{z_{i}} = b) \leq 1$ for $b = A_{z_{i}}$ and the loss estimator is identically zero for all other $b$. We here define $l(i)$ as the index in $\mathcal{Z}$ of the last observation before round $z_{i}$. We now consider the difference in these indices, namely $(i-1) - l(i)$.  

Note that the loss from $z_{i}$ is observed at time $z_{i} + d_{z_{i}}$, but the losses from rounds $z_{i-1}, z_{i-2},...$  could potentially also be observed at this point. This means that all  observations of losses from rounds $\Psi_{i-1} \backslash \Psi_{l(i)}$ are found in $[z_{i},z_{i} + d_{z_{i}}]$. As maximally $N$ observations can be made both in $[z_{i},z_{i} + d_{z_{i}})$ and in $[z_{i} + d_{z_{i}}, z_{i} + d_{z_{i}}]$ by assumption, and these $2N$ observations must include the observation of the loss from round $z_{i}$, we have a bound of
\begin{align}
  (i - 1) - l(i) \leq 2N - 1.
\end{align}
Telescoping the probability ratio and using the inductive assumption, we thus have
\begin{align}
  \frac{q^{A_{z_{i}}} ( \Psi_{i-1})}{q^{A_{z_{i}}}(\Psi_{l(i)})} &= \prod_{j = l(i)+1}^{i-1} \frac{q^{A_{z_{i}}} ( \Psi_{j})}{q^{A_{z_{i}}}(\Psi_{j - 1})}\nonumber\\
  	&\leq \prod_{j = l(i)+1}^{i-1} \left ( 1 + \frac{1}{2N - 1} \right )\nonumber\\
	&= \left ( 1 + \frac{1}{2N - 1} \right )^{2N-1}\nonumber\\
	 &\leq e. \label{tighterfactor}
\end{align}
Inserting this into \eqref{initBound} and using the upper bound on the learning rate gives us
\begin{align*}
  q^{a} ( \Psi_{i-1}) &\geq q^{a}(\Psi_{i}) (1 - \eta' e) \\
  	&\geq q^{a}(\Psi_{i}) \left ( 1 - \frac{1}{2N} \right),
\end{align*}
which rearranges to the lemma statement. This concludes the inductive step \qed.


\subsection{Full proof of Theorem~\ref{thm:dew}}
We start by combining Lemmas~\ref{lemma:hedge},~\ref{lemma:lower}~and~\ref{lemma:upper} in the following way. By using Lemma~\ref{lemma:lower} to bound the left hand side of \eqref{hedgeAnalysis} we have
\begin{align}
  \sum_{t}\sum_{a}p_{t+d_{t}} \hat{\ell}_{t}^{a} \geq \sum_{t} \sum_{a} p_{t}^{a} \hat{\ell}_{t}^{a} - \eta' \sum_{t}\sum_{a}\hat{\ell}_t^a \sum_{i=1}^{N_{t}} q^{a}\left (C_{t-1} \cup \{ z_{j}': j < i \} \right )\hat{\ell}_{z_{i}'}^{a},
\end{align}
subtracting the final term gives us:
\begin{align}
  \sum_{t} \sum_{a} p_{t}^{a} \hat{\ell}_{t}^{a} - \min_{a} \sum_{t} \hat{\ell}_{t}^{a} \leq  \frac{\ln K}{\eta'} 
  &+ \frac{\eta'}{2} \sum_{t=1}^{T} \sum_{a=1}^{K} p_{t+d_{t}}^{a} \left ( \hat{\ell}_{t}^{a} \right )^{2} \nonumber \\
  &+  \eta' \sum_{t}\sum_{a} \hat{\ell}_t^a\sum_{i=1}^{N_{t}}q^{a} \left (C_{t-1} \cup \{ z_{j}': j < i \} \right )\hat{\ell}_{z_{i}'}^{a}.\label{intermediate}
\end{align}
Where we note that the left hand side becomes the expected regret when taking expectations over the choice of $A_{t}$.

The second term on the right hand side of \eqref{intermediate} can be bounded by repeated use of Lemma~\ref{lemma:upper}:
\begin{align*}
\sum_{t}\sum_{a} p_{t+d_{t}}^{a} \left (\hat{\ell}_{t}^{a}\right)^{2} &= \sum_{t}\sum_{a} p_{t}^{a} \frac{p_{t+d_{t}}^{a}}{p_{t}^{a}} \left (\hat{\ell}_{t}^{a}\right)^{2}\\
		&= \sum_{t}\sum_{a} p_{t}^{a} \left (\hat{\ell}_{t}^{a}\right)^{2}     \prod_{i=1}^{N_{t}} \frac{q^{a}(C_{t-1} \cup \{z'_{j}: j \leq i \})}{q^{a}(C_{t-1} \cup \{z'_{j}: j < i \})}\\
		&\leq  \sum_{t}\sum_{a} p_{t}^{a} \left (\hat{\ell}_{t}^{a}\right)^{2} \left (1 + \frac{1}{2N-1} \right)^{N_{t}} \\
		&\leq  \sum_{t}\sum_{a} p_{t}^{a} \left (\hat{\ell}_{t}^{a}\right)^{2} \left (1 + \frac{1}{2N-1} \right)^{2N-1} \\
		&\leq  \sum_{t}\sum_{a} p_{t}^{a} \left (\hat{\ell}_{t}^{a}\right)^{2} e,
\end{align*}
which in expectation is bounded by $TK e$.

The final term in \eqref{intermediate} requires a bit more work. We first note that:
\begin{align*}
  \E \left [  \sum_{t}\sum_{a} \hat{\ell}_t^a \sum_{i=1}^{N_{t}}q^{a} \left (C_{t-1} \cup \{ z_{j}': j < i \} \right )\hat{\ell}_{z_{i}'}^{a} \right ] \leq \sum_{t} N_{t},
\end{align*}
since $t$ is not part of the enumeration $z_j'$, so the two expectations are taken independently: $\E [ \hat{\ell}_{z_{j}'}^{a}] \leq 1$ and $\E [ \hat{\ell}_{t}^{a}] \leq 1$. Additionally we use that $q^{a}$ is a distribution. We now note that summing over $t$ or $s$ is equivalent in the above, i.e.,
\begin{align*}
  \sum_{t} N_{t} \leq \sum_{t} \vert \{ s : s + d_{s} \in [t,t+d_{t}) \} \vert = \sum_{s} \vert \{ t : s + d_{s} \in [t,t+d_{t}) \}\vert,
\end{align*}
since counting in how many intervals every loss is observed in is the same as counting how many losses are observed in every interval. Note that we implicitly restrict both $s$ and $t$ to be in $[T]$.

We now split this
\begin{align*}
  \sum_{s} \vert \{ t : s + d_{s} \in [t,t+d_{t}) \}\vert &= \sum_{s} \vert \{ t> s : s + d_{s} \in [t,t+d_{t}) \}\vert  \\&\quad \quad \quad +  \vert \{ t < s: s + d_{s} \in [t,t+d_{t}) \}\vert
\end{align*}
and bound the first term as
\begin{align}
  \vert \{ t > s : s + d_{s} \in [t,t+d_{t}) \}\vert &\leq \vert \{ t > s : t \leq s + d_{s} \} \backslash \{t > s : t + d_{t} < s + d_{s} \} \vert\nonumber \\
  	&\leq d_s - \vert \{ t > s : t + d_{t} < s + d_{s} \} \vert \label{firstterm},
\end{align}
The second term is similarly bounded as
\begin{align}
  \vert \{ t < s: s + d_{s} \in [t,t+d_{t}) \}\vert \leq \vert \{ t < s : s + d_{s}  < t+d_{t} \}\vert\label{secondterm}.
\end{align}
Finally we note that by the prior equivalency of summing over $t$ or $s$, the negative term in \eqref{firstterm} cancel with \eqref{secondterm} once summed. This bounds the final term of \eqref{intermediate} by $D$ and results in
\begin{align}
    \bar{\mathcal{R}}_T \leq \frac{\ln K}{\eta'} + \eta' \left ( \frac{eKT}{2} + D \right ). \label{preTuning}
\end{align}
We now consider the truncation of the learning rate which is mandated by Lemma~\ref{lemma:upper}. If the input learning rate fulfils $\eta \leq (2eN)^{-1}$ then $\eta' = \eta$, and \eqref{preTuning} simply becomes
\begin{align*}
    \bar{\mathcal{R}}_T \leq \frac{\ln K}{\eta} + \eta \left ( \frac{eKT}{2} + D \right ),
\end{align*}
where $\eta$ is the input learning rate.

If instead the learning rate is truncated, meaning the input learning rate is larger than $(2eN)^{-1}$, the algorithm uses $\eta' = (2eN)^{-1}$, meaning \eqref{preTuning} becomes
\begin{align*}
    \bar{\mathcal{R}}_T \leq 2eN \ln K + \frac{\frac{eKT}{2} + D}{2eN} \leq  2eN \ln K +  \eta \left ( \frac{eKT}{2} + D \right ).
\end{align*}
Taking the maximum of these two regret bounds finalizes the proof for any input learning rate $\eta$. \qed

\section{Additional proofs}\label{appendix:additionalproofs}

\subsection{Proof of Lemma~\ref{lemma:skipper}}
Consider first skipping just one round $s$. We then have
\begin{align*}
  \bar{\mathcal{R}}_{T} &:= \E_{A_{1},\dots,A_{T}} \left [ \sum_{t} \sum_{a} p_{t}^{a} \ell_{t}^{a} \right ] - \min_{a} \sum_{t} \ell_{t}^{a} \\
  &\leq \E_{A_{1},\dots,A_{T}} \left [ \sum_{a} p_{s}^{a} \ell_{s}^{a} \right ] -\min_a \ell_s^a
 	 + \E_{A_{1},\dots,A_{T}} \left [ \sum_{t \neq s} \sum_{a} p_{t}^{a} \ell_{t}^{a} \right ] - \min_a \sum_{t\neq s} \ell_{t}^{a} \\
  &\leq 1 + \E_{A_{1},\dots,A_{s-1},A_{s},\dots,A_{T}} \left [ \sum_{t \neq s} \sum_{a} p_{t}^{a} \ell_{t}^{a} \right ] - \min_a \sum_{t\neq s} \ell_{t}^{a}\\
  &= 1 + \bar{\mathcal{R}}_{T \backslash \{s\}},
\end{align*}
where the first inequality uses $\min_{a} [x_{a} + y_{a} ] \geq \min_{a} x_{a} + \min_{a} y_{a}$ for any $x,y$ and the second inequality uses $\ell_{s}^{a} \in [0,1]$ for all $a$ and $p_{s}^{a}$ being a distribution. In this line we also use the fact that no $p_{t}$ depend on $A_{s}$. The proof is then complete by iterating this argument over all $s \in C$.

\subsection{Proof of Lemma~\ref{lemma:doubling}}
The first inequality follows directly from insertion of $\beta_m = \sqrt{\omega_m}/4e\ln K$ into \eqref{epochBound} and using the doubling condition for staying in the epoch \eqref{nonDoublingCond}.

For the second condition, we consider several cases of the optimal value in epoch $m$:

\paragraph{Case 1} If $\beta_{m}^{*} \geq \beta_{m}$ we have
\begin{align}
  \bound_{m}(\beta_{m}^{*}) \geq 4e\beta^{*}_{m} \ln K \geq 4e \beta_{m} \ln K = \sqrt{\omega_m} \geq \frac{\bound_{m}(\beta_{m})}{3}, \label{case1}
\end{align}

In all following cases we consider $\beta_{m}^{*} < \beta_{m}$.

\paragraph{Case 2} We now consider the case where $\beta_{m}^{*} < \beta_{m}$ and the doubling happened because the number of skipped rounds grew to large. This implies the following inequality
\begin{align*}
  \left ( \vert S^{m}_{\beta_{m}} \vert + 1 \right )^{2} \geq \omega_{m},
\end{align*}
leading to
\begin{align}
  \bound_{m}(\beta_{m}^{*}) \geq \vert S^{m}_{\beta_{m}^{*}} \vert \geq \vert S_{\beta_{m}}^{m} \vert \geq \sqrt{\omega_{m}} - 1 \geq \frac{\bound_{m}(\beta_{m})}{3} -1, \label{case2}
\end{align}
where the second inequality comes from the assumption that $\beta_{m}^{*} < \beta_{m}$, meaning at least as many delays are skipped using $\beta_{m}^{*}$, as this is a lower threshold for skipping.

\paragraph{Case 3} If $\beta_{m}^{*} < \beta_{m}$ and the doubling instead happened because the second term grew too large, we have the following inequality:
\begin{align}
   \big ( Ke/2 \cdot (\sigma(m)  + 1 ) + D_{\beta_{m}}^{m} +\beta_m \big ) \ln K \geq \omega_{m}. \label{case3bound}
\end{align}

In this case we have
\begin{align}
  \bound_{m} ( \beta_{m}^{*}) &\geq \frac{eK\sigma(m)/2 + D_{\beta_{m}^{*}}^{m}}{\beta_{m}^{*}}\nonumber \\
  	&= \frac{\beta_{m}}{\beta_{m}^{*}} \left ( \frac{eK\sigma(m) /2 + D_{\beta_{m}}^{m}}{\beta_{m}} 
		+  \frac{D_{\beta_{m}^{*}}^{m} -  D_{\beta_{m}}^{m}}{\beta_{m}}\right )\nonumber \\
	&= \frac{\beta_{m}}{\beta_{m}^{*}} \left (\frac{eK\sigma(m) /2 + D_{\beta_{m}}^{m}}{\beta_{m}} 
		- \Delta \vert S \vert \right ) \label{deltaSub} \\
	&=  \frac{\beta_{m}}{\beta_{m}^{*}}  \left (4e \sqrt{\omega_m} - \frac{eK}{2\beta_m} - 1  - \Delta \vert S \vert  \right )\label{condSub} \\
	&= \frac{\beta_{m}}{\beta_{m}^{*}} \left ( 4e \sqrt{\omega_{m}} - \frac{2e^2 K\ln K}{\sqrt{\omega_{m}}} - 1 - \Delta \vert S \vert \right ), \label{varyingdeltaS}
\end{align}
where \eqref{deltaSub} uses  $D_{\beta_{m}}^{m} \leq D_{\beta_{m}^{*}}^{m} + \beta_{m} \Delta \vert S \vert$ for $\Delta \vert S \vert =  \vert S_{\beta_{m}^{*}}^{m}\vert -  \vert S_{\beta_{m}}^{m}\vert$. For \eqref{condSub} we use \eqref{case3bound} with $\beta_{m} = \sqrt{\omega_{m}}/4e\ln K$.
\\\\

Again we consider cases, this time of \eqref{varyingdeltaS}. Assume first
\begin{align*}
 4e \sqrt{\omega_{m}} - \frac{2e^2 K\ln K}{\sqrt{\omega_{m}}} - 1 - \Delta \vert S \vert \geq 2e \sqrt{\omega_m}.
\end{align*}
which means
\begin{align*}
   \bound_{m} ( \beta_{m}^{*}) \geq \frac{\beta_{m}}{\beta_{m}^{*}} 2e \sqrt{\omega_{m}} \geq \bound_{m}(\beta_{m}).
\end{align*}
If we instead assume
\begin{align*}
	4e \sqrt{\omega_{m}} - \frac{2e^2 K\ln K}{\sqrt{\omega_{m}}} - 1 - \Delta \vert S \vert \leq 2e\sqrt{\omega_m},
\end{align*}
which implies
\begin{align*}
  \Delta \vert S \vert \geq 2e \sqrt{\omega_{m}} - \frac{2e^2 K\ln K}{\sqrt{\omega_{m}}} - 1,
\end{align*}
we directly have
\begin{align}
  \bound_{m}(\beta_{m}^{*}) \geq \Delta \vert S \vert \geq \bound_{m}(\beta_{m}) - 2e^{2} K \ln K - 1, \label{case3}
\end{align}
where we have used $\sqrt{\omega_{m}} \geq 1$. Note that this final inequality is the worst case of case 3.

Finally we compare the cases: Noting that they are exhaustive and by comparing \eqref{case1}, \eqref{case2} and \eqref{case3} the lemma is proven.\qed

\subsection{Proof of Theorem~\ref{thm:implicit}}\label{proof:thm:implicit}
The idea of the proof is to use the nature of the doubling schema in the usual way, combined with Lemma~\ref{lemma:doubling} for the second to last epoch.

Let $M$ be the total number of epochs in the doubling schema:
\begin{align*}
  \bar{\mathcal{R}}_{T} &\leq \sum_{m=}^{M} \bound_{m}(\beta_{m}) \\
  		&\leq \sum_{m=1}^{M} 3 \sqrt{\omega_{m}} \\
		&= \sum_{m=1}^{M} 3 \sqrt{2}^{m} \\
		&= 3 \frac{\sqrt{2}^{M+1} - 1}{\sqrt{2} - 1} \\
		&\leq \frac{6}{\sqrt{2} - 1} \sqrt{2^{M-1}} \\
		&= \frac{6}{\sqrt{2} - 1} \sqrt{\omega_{M-1}} \\
		&\leq \frac{2}{\sqrt{2} - 1} \left (3 \bound_{M-1}(\beta_{M-1}^{*}) + 2e^{2}K\ln K + 1 \right).
\end{align*}
The proof is finalised by
\begin{align*}
  \bound_{M-1}(\beta_{M-1}^{*}) &\leq \min_{\beta} \left \{ \vert S_{\beta} \vert + 4e\beta \ln K + \frac{KT + D_{\beta}}{4e\beta} \right \}. \qed
\end{align*}

\end{document}